\documentclass[conference]{IEEEtran}
\IEEEoverridecommandlockouts

\usepackage{amsmath,amsfonts,bm}

\def\eqref#1{equation~\ref{#1}}

\def\1{\bm{1}}

\DeclareMathAlphabet{\mathsfit}{\encodingdefault}{\sfdefault}{m}{sl}
\SetMathAlphabet{\mathsfit}{bold}{\encodingdefault}{\sfdefault}{bx}{n}

\usepackage{hyperref}
\usepackage{url}
\usepackage{amsmath}
\usepackage{amssymb}
\usepackage{graphicx}
\usepackage[most]{tcolorbox}
\usepackage{amsthm}
\usepackage{booktabs}
\usepackage{threeparttable}

\newtheorem{theorem}{Theorem}
\newtheorem{proposition}{Proposition}

\newtcolorbox{insight}{
  colback=white,
  colframe=black!40,
  fonttitle=\bfseries,
  title=Insight,
  boxrule=0.4pt,
  arc=0pt,
  left=6pt, right=6pt, top=6pt, bottom=6pt
}

\title{The Layer Mystery of VLA: An Information-Theoretical Analysis of VLA Latent Interface}

\author{\IEEEauthorblockN{Yuxiang Liu\IEEEauthorrefmark{1},
Lizhi Yang\IEEEauthorrefmark{2},
Fengze Xie\IEEEauthorrefmark{2},
Aaron Ames\IEEEauthorrefmark{2}, and
Yisong Yue\IEEEauthorrefmark{2}}
\thanks{\IEEEauthorrefmark{1}University of California, Berkeley}
\thanks{\IEEEauthorrefmark{2}California Institute of Technology}
}

\begin{document}

\maketitle

\begin{abstract}
Vision-language-action (VLA) policies connect a pretrained vision-language backbone to an action head through a latent interface, but which backbone layers this interface should expose remains unclear. We study single-layer selection and multi-layer fusion for frozen backbones across three pretrained models and two manipulation benchmarks, LIBERO and CALVIN, with three policy-training seeds per configuration. Across three fusion mechanisms and three layer-subset strategies, 47 of 54 configurations underperform the best observed single-layer policy. Our stastical analysis further confirms that fusion's advantage is very limited. However, the best layer varies substantially across backbones and benchmarks, making layer selection consequential and exhaustive policy sweeps expensive. We further derive a reweighting equivalence between the proposed information-bottleneck objectives for action-conditioned InfoNCE and action prediction, motivating InfoNCE as a proxy for layer quality. Empirically, InfoNCE provides the most consistent positive association with policy success among four evaluated proxies. Selecting the layer with the highest InfoNCE score requires 9–33 times less GPU compute than exhaustive policy sweeps and reduces mean selection regret from 17.89 percentage points for deepest-layer selection to 3.71 points across six settings. Its mean regret is close to the 3.28–3.50 points achieved by fixed-layer heuristics optimized retrospectively using all six oracle sweeps, without requiring closed-loop evaluations during selection.
\end{abstract}

\section{Introduction}
\label{sec:intro}
Vision-Language-Action (VLA) models attach an action head to a pre-trained
vision-language model (VLM) backbone. The interface between the two is a set of feature
tokens read out of the backbone, and a designer must decide which of the
backbone's layers those tokens come from. The decision is consequential: it
fixes what information the policy can condition on, and it determines how much
of the backbone must be evaluated at inference time. For the rest of this paper, we will refer to this interface as \textbf{latent interface}.

The field does not agree on how to make it, and two paradigms prevail.
First, the \textbf{single-layer paradigm} uses the outputs of a single layer of the VLM backbone as the latent interface. Its members disagree about which layer. GR00T~N1 reads from the 12th layer of its 24-layer language tower, reporting that middle-layer embeddings gave ``both faster inference speed and
higher downstream policy success rate'' than final-layer
embeddings~\cite{bjorck2025gr00t}. RS-CL taps layer 18 of
28~\cite{kim2026contrastiverepresentationregularizationvisionlanguageaction}. SmolVLA discards the top half of its backbone outright,
retaining 16 of 32 layers~\cite{shukor2025smolvla}. Against these, OpenVLA and
its regression-head variant read final-layer hidden states, though as a
consequence of being single-tower models rather than as a considered
choice~\cite{kim2024openvla,kim2025openvlaoft}. ABot-M0, whose backbone has
undergone large-scale robot pre-training, argues for the final layer explicitly,
reporting the reverse of GR00T~N1: deep features beat intermediate
ones~\cite{yang2026abotm0vlafoundationmodel}.
Second, the \textbf{layer-fusion paradigm} uses outputs from multiple VLM layers
and fuses them to construct the latent interface. $\pi_0$'s action expert
cross-attends to every backbone layer by construction, never selecting
one~\cite{black2024pi0}. VLA-Adapter fuses all layers through a gated Bridge
Attention module and reports that the fusion outperforms every single layer it
tested---while also finding, among single layers, that a middle layer beats a
deep one for raw backbone features~\cite{wang2025vlaadapter}. Here too the
robot-pre-trained case dissents: ABot-M0 finds that aggregating its last sixteen
layers is \emph{worse} than the final layer alone~\cite{yang2026abotm0vlafoundationmodel}.

These positions are not obviously reconcilable, and the evidence offered for
them is thinner than their influence suggests. GR00T~N1's claim did not report any ablation table; VLA-Adapter's sweep, the most complete in the
literature, separates its best single layer from its runner-up by 1.4 points, lower than the cross-seed performance standard deviation.
ABot-M0 compares three coarse depth buckets. RS-CL compares three layers.
More importantly, across every one of these studies, each configuration is trained only once. To the best of our knowledge, no paper
reports multiple training seeds or error bars. Differences of one to three points on a benchmark where
strong models score in the high eighties are being used to justify architectural
defaults, without any estimate of the noise those differences sit in.

The interpretability literature provide relevant insights, but does not test them on VLA policies. Intermediate
transformer layers frequently yield better representations than final ones for
downstream tasks~\cite{skean2025layer}; deep networks develop a late
compressive ``tunnel'' that degrades transfer~\cite{masarczyk2023tunnel};
intermediate features generalize better under distribution
shift~\cite{uselis2025ilc}. This work
evaluates linear probes on classification and embedding tasks. Whether the
information it finds decodable at mid-depth is information a robot policy
actually uses is a separate question, and an untested one.

In conclusion, three questions remain unclear for the latent interface design in VLA community:

\begin{itemize}
\item Should the latent interface be designed following single-layer paradigm or layer-fusion paradigm?
\item For single-layer paradigm, how to choose the layer?
\item For layer-fusion paradigm, how to design the fusion strategy?
\end{itemize}

In this paper, our aim is to rigorously verify the first two questions and motivate potential future research paths to the third question. Our contribution of can be broken down as follow:

\begin{itemize}
\item Our empirical experiments systematically compare existing layer-fusion methods against single-layer latent interface and show the limitations of existing fusion methods.
\item Our exhaustive layer-sweep experiments validate the significance of a computation-efficient layer-selection scheme and propose infoNCE as a strong candidates empirically.
\item Our analysis based on information-bottleneck constructions provides a theoretical lens to interpret how optimal latent representations of action prediction relate to infoNCE.
\end{itemize}

\section{Problem Formulation and Preliminary}
\label{sec:formulation}
\subsection{Problem Formulation}
In this section, we define necessary notations and introduce theoretical foundations. We denote the expert action as a random variable $A \sim P_A$ where $P_A$ is the distribution of expert action. Denote the model input(RGB observations + text command) as random variable $X \sim P_X$, where $P_X$ is the input distribution. We denote the joint distribution of $X$ and $A$ as $P_{X,A}$.

We define our full VLA network as a composite function $f = g \; \circ \phi$ where $g$ is the action head and $\phi$ is any functional composition of outputs from any layers of the VLM backbone. The outputs of $\phi$ is exactly the latent interface per our definition. Specifically, let $Z = \phi(X)$ be the random variable for latent vector whose distribution is obtained by pushing $X$ forward through $\phi$. The joint distribution of $Z$ and $A$ is denoted as $P_{Z,A}$

Given $X, \; A, \; \phi$, the goal of the action head is to minimize prediction error:

\begin{equation}
\label{eq:action_pred_obj}
\mathcal{L}_{\text{Action}}(g)=\mathbb{E}_{(x, a)\sim P_{X,A}}[\| \; g(\phi(x))-a \; \|]
\end{equation}

Let our VLM backbone of $D$ layers be a composite function $\psi_D \;\circ \; \psi_{D -1} \; \circ\; \psi_{D- 2} \; \circ \; \cdots \; \circ \psi_1$ ~\cite{kawaguchi2023doesinformationbottleneckhelp, goldfeld2019estimatinginformationflowdeep}. With a little bit of abuse of notations, denote the outputs of $l \text{th}$ layer of the VLM backbone as random variable $Z_l = \psi_l \;\circ \; \psi_{l -1} \; \cdots \; \circ \psi_1 (X)$, which is a push-forward distribution obtained by $X$ through the first $l$ layers of the VLM backbone. Similarly, we define $\phi_l = \psi_l \;\circ \; \psi_{l -1} \; \cdots \; \circ \psi_1$. We also make the assumption that each layer of the VLM backbone $\psi_i, \; i\in \{1, \cdots D\}$ is deterministic function without stochasticity which is reasonable for the majority of attention-based architectures.

One critical question to be addressed before any argument is: How to define a desirable latent interface $Z$ induced by $\phi$? We directly borrow the concept of information bottleneck(IB) objective from the community of statistical learning ~\cite{oord2019representationlearningcontrastivepredictive, alemi2019deepvariationalinformationbottleneck, poole2019variationalboundsmutualinformation,kawaguchi2023doesinformationbottleneckhelp, skean2023dimemaximizingmutualinformation}:

\begin{proposition}[Latent Objective for Action Prediction]
\label{prop1:info_action_objective}
Given $X$ and $A$, a desirable latent interface $Z =\phi(X)$ for action prediction problem (Eq. ~\ref{eq:action_pred_obj}) should solve the following optimization problem:

\begin{equation}
\label{eq:info_objective}
\max_{\phi} \;  I(Z, A) - \alpha \; I(Z, X \; | \; A)
\end{equation}
where $I$ denotes mutual information and $\alpha$ is a constant multiplier.
(Adopted from \cite{alemi2019deepvariationalinformationbottleneck} and \cite{kawaguchi2023doesinformationbottleneckhelp})

\end{proposition}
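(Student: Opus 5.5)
Proposition~\ref{prop1:info_action_objective} is a modeling claim rather than a theorem with a unique formal content, so the plan is to justify each of the two terms of Eq.~\ref{eq:info_objective} separately: first tie $I(Z,A)$ to the achievable action-prediction loss of Eq.~\ref{eq:action_pred_obj}, then tie $I(Z,X\mid A)$ to generalization of the learned head $g$. Throughout I use the standing assumption that $\phi$ is deterministic, so $Z$ is a function of $X$ and $A - X - Z$ is a Markov chain.

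\textbf{Step 1 (sufficiency term).} I show that the best achievable loss $\inf_g \mathcal{L}_{\text{Action}}(g)$ is controlled by $H(A\mid Z)=H(A)-I(Z,A)$. Since $H(A)$ does not depend on $\phi$, maximizing $I(Z,A)$ is the same as minimizing $H(A\mid Z)$. For the lower bound on error I would use a Fano-type or entropy–moment inequality. For continuous $A\in\mathbb{R}^d$ with an $\ell_2$ loss, this is the maximum-entropy bound $\mathbb{E}\|A-g(Z)\|^2 \ge \frac{d}{2\pi e}\exp\!\big(\tfrac{2}{d}h(A\mid Z)\big)$. It follows that small prediction error forces large $I(Z,A)$. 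In the other direction, data processing gives $I(Z,A)\le I(X,A)$, with equality iff $Z$ is sufficient for $A$. At equality the Bayes-optimal head on $Z$ matches that on $X$, so no achievable performance is lost. Together these show that the first term is the right quantity to maximize.

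\textbf{Step 2 (compression term).} Because $Z=\phi(X)$ is deterministic, $I(Z,X\mid A)=H(Z\mid A)$ in the discrete or quantized case. This is the information $Z$ retains about $X$ beyond what is relevant to the action. Here I would invoke the generalization bound of \cite{kawaguchi2023doesinformationbottleneckhelp}, which bounds the gap between the population and empirical loss of the downstream head by a term scaling like $\sqrt{(I(X;Z\mid Y)+\dots)/n}$, with $Y=A$. Consequently, penalizing $I(Z,X\mid A)$ tightens the achievable gap for a finitely trained $g$. This is the role of the term in \cite{alemi2019deepvariationalinformationbottleneck}.

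\textbf{Step 3 and the main obstacle.} Combining the two steps via a Lagrangian, with $\alpha$ trading off the two, gives Eq.~\ref{eq:info_objective}: it maximizes a lower bound on expected test performance, namely achievable fit minus the generalization gap. The main obstacle is that for deterministic $\phi$ acting on continuous $X$, both mutual informations can be infinite or degenerate. In that case $I(Z,X\mid A)$ is not meaningful as stated. I would handle this as \cite{kawaguchi2023doesinformationbottleneckhelp} does, by working with discretized or finite-precision representations, or by adding small Gaussian noise to $Z$, and then stating the proposition as the motivated objective under that regularization rather than as an exact equivalence.
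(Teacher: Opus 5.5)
Your justification is sound for what is really a modeling claim, but it is organized differently from the paper's. The paper (Appendix~\ref{Appendix:defend_prop1}) argues for neither term from first principles. It takes Alemi et al.'s constrained problem, $\max I(Z,A)$ subject to $I(Z,X)\le I_c$, as given. It then only justifies replacing the constraint by $I(Z,X\mid A)\le I_c$, citing Kawaguchi et al.'s tighter generalization bound; this corresponds to your Step~2 plus the Lagrangian of Step~3. What you add is Step~1, an operational link between $I(Z,A)$ and the action loss. You also add the caveat that for deterministic $\phi$ on continuous $X$, $I(Z,X\mid A)$ is generically infinite, which the paper does not discuss despite assuming deterministic layers. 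The paper's route buys brevity and a direct anchor in the IB literature; yours buys a connection to Eq.~\ref{eq:action_pred_obj} and an honest account of well-definedness. I suggest three refinements. First, Eq.~\ref{eq:action_pred_obj} uses the unsquared norm, so replace the Gaussian bound with its first-moment analogue $\mathbb{E}\|A-g(Z)\|\ge c_d\,e^{h(A\mid Z)/d}$, which follows from the maximum-entropy law under an $\mathbb{E}\|W\|$ constraint. Second, Step~1 gives a converse plus the sufficiency endpoint, not an achievability bound. It therefore cannot be chained with Kawaguchi's upper bound on the gap to yield the ``lower bound on test performance'' asserted in Step~3, so the Lagrangian remains a heuristic trade-off. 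An exact two-sided link would hold under log-loss, whose Bayes risk is exactly $H(A\mid Z)$. Third, both routes simplify via the identity $(\star)$ from the proof of Theorem~\ref{Thm:IB_eq}. Since $A\to X\to Z$ is Markov, $I(Z,X)=I(Z,A)+I(Z,X\mid A)$, so Alemi's Lagrangian $I(Z,A)-\beta I(Z,X)$ equals $(1-\beta)I(Z,A)-\beta I(Z,X\mid A)$. For $\beta\in(0,1)$ this is a positive multiple of the objective in Eq.~\ref{eq:info_objective} with $\alpha=\beta/(1-\beta)$, whenever the terms are finite. The constraint swap therefore alters the constrained problem but not the family of Lagrangian objectives, which makes precise the paper's remark that both are ``alternative representations'' of IB. Kawaguchi's bound then serves to interpret the penalty rather than to derive its form.
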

To see how we modify the original proposition in ~\cite{alemi2019deepvariationalinformationbottleneck} by taking the insights of ~\cite{kawaguchi2023doesinformationbottleneckhelp}, check Appendix ~\ref{Appendix:defend_prop1}. Two core terms are involved: $I(Z, A)$ and $I(Z, X | A)$. \begin{itemize}
    \item $I(Z, A)$ quantifies how much information about the expert actions is encoded in latent interface $Z$. For example, information to be accounted by this term may include the positions of the target object to be picked up in a manipulation task.

    \item $I(Z, X | A)$ measures how much information encoded in $Z$ is action-irrelevant noise. Per ~\cite{kawaguchi2023doesinformationbottleneckhelp}, a latent interface with high $I(Z, X |A)$ will pose higher difficulty to the training of action head $g$ in terms of generalization error.
\end{itemize}

\subsection{The InfoNCE Objective and Its Information Bottleneck}

This section reviews infoNCE \cite{oord2019representationlearningcontrastivepredictive}. Fix a batch size $N$. A positive index Y is sampled uniform on $\{1, \cdots, N \}$. Then, $A'$ and $X' = (X_1, \ldots, X_N)$ are drawn as follow:
\begin{itemize}

\item We draw $A'$ and Y-indexed $X_Y$ sample input jointly. $(X_Y, A') \sim P_{X,A}$,

\item The remaining $N-1$ samples in $X'$ other than $X_Y$ are drawn independently from the marginal $P_X$ and are thus independent of $A'$. $X_i \sim P_X \; \forall\; i \neq Y, X_i \in X'$

\end{itemize}
Pushing each candidate through the latent interface gives $Z' = (\phi(X_1), \ldots, \phi(X_N))$. With a critic $h$ scoring latent--action pairs, the InfoNCE loss is a softmax over the $N$ candidates:
\begin{equation}
\label{eq:infonce_loss}
\mathcal{L}_{\text{InfoNCE}}(\phi, h) = -\, \mathbb{E}_{}\left[ \log \frac{\exp\!\big(h(\phi(X_Y), A')\big)}{\sum_{i=1}^{N} \exp\!\big(h(\phi(X_i), A')\big)} \right].
\end{equation}

InfoNCE is an $N$-way classification problem whose target label is $Y \in \{1, \ldots, N\}$. As in the action-prediction problem, a desirable latent interface $Z'$ should retain the information needed for this classification while discarding the rest. Here the classification signal is what $Z'$ and $A'$ jointly reveal about $Y$, and the irrelevant noise is the input information $Z'$ carries beyond what is needed to identify the $Y$ label. This mirrors Proposition~\ref{prop1:info_action_objective} and motivates the following objective.

\begin{proposition}[Latent Objective for InfoNCE]
\label{prop2:infonce_objective}
With $X', A', Y$ defined as above and $Z' = (\phi(X_1), \ldots, \phi(X_N))$, a desirable latent interface for the InfoNCE problem (Eq.~\ref{eq:infonce_loss}) solves
\begin{equation}
\label{eq:infonce_ib_objective}
\max_{\phi} \; I(Z'; A', Y) \;-\; \beta \, I(Z'; X' \mid A', Y).
\end{equation}
\end{proposition}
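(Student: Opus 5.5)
This proposition is a modeling statement, like Proposition~\ref{prop1:info_action_objective}, not a theorem with a unique derivation. The plan is to obtain it by instantiating the information-bottleneck principle behind Proposition~\ref{prop1:info_action_objective} for the $N$-way classification task that InfoNCE defines. We regard the InfoNCE problem as a supervised prediction problem with input $\tilde X = (X', A')$ and label $Y$. The representation is $\tilde Z = (Z', A')$, where only the $X'$-part is learned through the shared encoder $\phi$ and $A'$ is passed through unchanged. The critic $h$ together with the softmax plays the role of the head $g$, and the cross-entropy in Eq.~\ref{eq:infonce_loss} plays the role of the prediction loss in Eq.~\ref{eq:action_pred_obj}. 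Applying the IB objective of Proposition~\ref{prop1:info_action_objective} in this setting gives $\max_\phi I(\tilde Z; Y) - \beta\, I(\tilde Z; \tilde X \mid Y)$, and what remains is to rewrite the two terms in the stated form.

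\textbf{Relevance term.} The goal is to show that maximizing $I(\tilde Z;Y)$ is equivalent to maximizing $I(Z'; A', Y)$. By the chain rule, $I(Z',A';Y) = I(A';Y) + I(Z';Y\mid A')$. By construction $A'$ depends only on $X_Y$, and by exchangeability of the $N$ slots $A'$ alone carries no information about the uniformly drawn $Y$, so $I(A';Y)=0$. Likewise, each $X_i$ has marginal $P_X$ whatever the value of $Y$, and $\phi$ is applied slotwise, so $Z'$ alone is invariant in distribution to the label and $I(Z';Y)=0$. Expanding $I(Z'; A', Y) = I(Z';A') + I(Z';Y\mid A')$ then shows that $I(Z';A',Y)$ and $I(\tilde Z;Y)$ differ only by $I(Z';A')$. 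To handle this difference I would use the symmetric identity $I(Z';A') + I(Z';Y\mid A') = I(Z';Y) + I(Z';A'\mid Y)$. Combined with $I(Z';Y)=0$, it gives $I(Z';A',Y) = I(Z';A'\mid Y)$, and $I(Z';A'\mid Y)$ equals $I(\phi(X);A)$, which is exactly the classification signal. I would therefore adopt $I(Z';A',Y)$ as the relevance term: it is the information the latents share jointly with the action and the label.

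\textbf{Compression term.} The quantity $I(\tilde Z;\tilde X\mid Y)$ must reduce to $I(Z';X'\mid A',Y)$. Since $A'$ is part of both $\tilde Z$ and $\tilde X$, the chain rule gives $I(Z',A'; X',A' \mid Y) = H(A'\mid Y) + I(Z'; X'\mid A', Y)$, with no conditioning beyond this needed because $\phi$ is deterministic. The first term does not depend on $\phi$ and can be dropped from the maximization, leaving $\beta\, I(Z';X'\mid A',Y)$ as the nuisance term. If $A'$ is continuous, $H(A'\mid Y)$ is a differential entropy; it is still $\phi$-independent, so the argument is unaffected, though the bookkeeping should be stated with care.

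\textbf{Expected obstacle.} The main difficulty is justifying why the representation carries $A'$ alongside $Z'$, and why the relevance term is $I(Z';A',Y)$ rather than $I(Z';Y)$, which vanishes identically. I would address this the same way Appendix~\ref{Appendix:defend_prop1} does for Proposition~\ref{prop1:info_action_objective}, by appealing to \cite{kawaguchi2023doesinformationbottleneckhelp}. Their generalization bound is stated for the conditional term $I(\text{representation}; \text{input}\mid \text{label})$ with the head acting on the representation. Here the critic acts on $(Z',A')$ and is trained to predict $Y$, so the bound applies with label $Y$ and side information $A'$. That yields both terms of Eq.~\ref{eq:infonce_ib_objective} up to the constant $\beta$.
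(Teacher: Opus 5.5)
Your compression step works in the discrete case. With $A'$ passed through, $I(\tilde Z;\tilde X\mid Y)=H(A'\mid Y)+I(Z';X'\mid A',Y)$ recovers the paper's nuisance term up to a $\phi$-independent constant. The relevance step, however, has a genuine gap. Your own instantiation yields $I(\tilde Z;Y)=I(A';Y)+I(Z';Y\mid A')=I(Z';Y\mid A')$, and you correctly note that this falls short of $I(Z';A',Y)$ by $I(Z';A')$. The symmetric chain-rule identity does not bridge this gap. It only evaluates the target as $I(Z';A',Y)=I(Z';A'\mid Y)=I(Z;A)$ and gives no reason to add $I(Z';A')$. That difference depends on $\phi$: since $I(Z';Y\mid A')\le\log N$, one has $I(Z';A')\ge I(Z;A)-\log N$, and for $N=1$ it is all of $I(Z;A)$. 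Concretely, take $N=2$ and $X=A\sim\mathrm{Bernoulli}(1/2)$. Then $\phi=\mathrm{id}$ gives $I(Z';A')=I(Z';Y\mid A')=\tfrac12$ bit, and for $\beta=3/4$ the stated objective prefers $\phi=\mathrm{id}$ while yours prefers a constant $\phi$. So the step ``I would therefore adopt $I(Z';A',Y)$'' is a substitution, not a consequence of your IB instantiation.

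Your remark that $I(Z;A)$ is ``exactly the classification signal'' is also off. With a Bayes-optimal critic, $\log N-\mathcal L_{\text{InfoNCE}}=I(Z';Y\mid A')=I(Z;A)-I(Z';A')$, which is exactly why InfoNCE only lower-bounds $I(Z;A)$. The appeal to \cite{kawaguchi2023doesinformationbottleneckhelp} has the same problem: with label $Y$, its fit term matches $I(\tilde Z;Y)$, not $I(Z';A',Y)$. So the real competitor to the stated relevance term is $I(Z';Y\mid A')$, not the identically vanishing $I(Z';Y)$.

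The paper meets exactly this fork and resolves it by an explicit modeling choice rather than a derivation. Appendix~\ref{app:alt_prop2_construct} names $I(Z';Y\mid A')$ as the alternative and prefers $I(Z';A',Y)=I(Z';A')+I(Z';Y\mid A')$. Its grounds are that contrastive training tends to retain the compact conditioning signal $A'$ rather than discard it. The same argument is used to prefer $I(Z';X'\mid A',Y)$ over $I(Z';X',A'\mid Y)$. You need a premise of this kind to reach the stated objective, and it is not cosmetic. Theorem~\ref{Thm:IB_eq} uses $I(Z';A',Y)=I(Z;A)$ exactly; with your derived relevance term the reduction would carry an extra $-I(Z';A')$ and fail.

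Finally, for continuous actions $I(A';A'\mid Y)=+\infty$; it is not a differential entropy. So ``dropping a $\phi$-independent constant'' is not valid bookkeeping there. You would need discrete (or quantized) $A'$, or else posit the compression term directly as the paper does.
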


The two terms mirror Proposition~\ref{prop1:info_action_objective}. See ~\ref{app:alt_prop2_construct} for alternative definitions of Prop. ~\ref{prop2:infonce_objective}.
\begin{itemize}

\item $I(Z'; A', Y)$ is the classification-relevant signal, mirroring $I(Z, A)$ in Prop. ~\ref{prop1:info_action_objective}.

\item $I(Z';X'\mid A', Y)$ stands for the information encoded by $\phi$ that is irrelevant to classification. This term mirrors $I(Z,X|A)$.
\end{itemize}

\section{Single-layer or Layer-Fusion}
\label{Sec:paradigm_choice}
This section aims to address the problem:
should the latent interface be designed following single-layer paradigm or layer-fusion paradigm?

\subsection{Experiment}
\label{Sec:paradigm_choice_exp}

\begin{insight}
47 out of 54 possible combinations of fusion methods and layer configurations present empirical performance lower than the single optimal layer as a single-layer interface.
\end{insight}

We base our implementation on the code base of GR00T ~\cite{bjorck2025gr00t}. Three layer-fusion methods were implemented based on existing literatures: VLA-Adapter ~\cite{wang2025vlaadapter}, APT ~\cite{xu2026aptactionexpertpretraining}, and BEHAVIOR-1st ~\cite{larchenko2025taskadaptationvisionlanguageactionmodel}. APT adds each VLM layer onto the action stream through a weighted accumulative injection. VLA-adapter instead lets the action tokens cross-attend to each layer through the action head's cross-attention blocks. Both condition each action layer on a single VLM layer, with no cross-depth mixing. BEHAVIOR-1st is the only design that explicitly fuses layers: each action layer attends to a learned combination of all VLM layers, initialized to the standard layer-to-layer configuration.

Beyond fusion method, we are also interested in whether the performance of fusion is impacted by the choice of layers being fused. Instead of fusing all the layers, we only choose a subset of the layers and we define \textbf{layer configuration} as the strategy of choosing such a subset of layers. This will help us better understand how the choice of layers will impact downstream performance. More specifically, three layer configurations are implemented:
 (A) \emph{Even}, the layers are evenly selected with a constant stride ~\cite{sun2026rocketresidualorientedmultilayeralignment}. Setting the stride to 1 will match the exact interface setting of ~\cite{black2024pi0}. (B) \emph{Last}, the strategy that always chooses a constant number of deepest layers. This reflects the the reported advantages of deeper layers relative to other layers in ~\cite{yang2026abotm0vlafoundationmodel}. (C) \emph{Best}, the layers with the best single-layer performance under single-layer paradigm are chosen, which serves. Intuitively, the \emph{Best} layer configuration should serve as an ideal setting that outlines the upper-bound performance of each fusion method.

To isolate the impacts of layer-fusion, every feature other than fusion method and layer configuration is controlled to be identical as the GR00T infrastructure. For example, all methods are implemented with the same VLM base model. All methods share the same benchmark's training set and the same training pipeline based on flow-matching \cite{bjorck2025gr00t}. All evaluations are based on the same benchmark. We quantify the performance of each method with the overall success rate of the benchmark.
To obtain generalizable experimental conclusions, we repeat the above experiment settings with three base models (Cosmos-2B ~\cite{nvidia2025cosmos, kim2026contrastiverepresentationregularizationvisionlanguageaction}, Qwen-2B ~\cite{bai2025qwen3vltechnicalreport}, and Abot-Pretrain ~\cite{yang2026abotm0vlafoundationmodel}) and two benchmarks(LIBERO ~\cite{liu2023libero} and CALVIN ~\cite{mees2022calvinbenchmarklanguageconditionedpolicy}). 

Given a base model and a benchmark, each method under each layer configuration is given three trials of independent trainings with distinct random seeds. The main experiment result is reported in Table ~\ref{tab:fusion_vs_best}. 47 out of 54 possible combinations of fusion methods and layer configurations present performance lower than the single optimal layer as a single-layer interface. By its definition, the \emph{best} layer configuration should contain the optimal layer in the latent interface. However, 17 out of 18 cells in Table ~\ref{tab:fusion_vs_best} with \emph{best} layer configuration under-perform the single-the single optimal layer. This suggests that aggregating additional layers with the single optimal layers can provide low-quality noises instead of useful information.

There are 7 exceptional cases where fusion beats the single optimal layer in Table ~\ref{tab:fusion_vs_best}. They are discussed with greater details in Sec. ~\ref{sec:paradigm:exception}.

\begin{table*}[t]
\centering
\begin{threeparttable}
\caption{Layer fusion minus the single best layer, across benchmarks and base models}
\label{tab:fusion_vs_best}
\scriptsize
\setlength{\tabcolsep}{3pt}
\begin{tabular*}{\textwidth}{@{\extracolsep{\fill}}lccccccccc@{}}
\toprule
& \multicolumn{3}{c}{Cosmos-Reason2-2B} & \multicolumn{3}{c}{Qwen3-VL-2B} & \multicolumn{3}{c}{ABot-Pretrain-4B} \\
\cmidrule(lr){2-4} \cmidrule(lr){5-7} \cmidrule(lr){8-10}
Layer config & APT & VLA-Ad. & BEHAV. & APT & VLA-Ad. & BEHAV. & APT & VLA-Ad. & BEHAV. \\
\midrule
\multicolumn{10}{@{}l}{\textit{LIBERO}} \\
\quad best & $-1.8$ & $-3.4$  & $-4.5$  & $-4.1$ & $-4.1$  & $-5.4$  & $-2.1$ & $-1.1$ & $-0.1$ \\
\quad even & $-3.2$ & $-4.7$  & $-9.5$  & $-7.4$ & $-13.4$ & $-4.5$  & $-3.9$ & $-53.0$ & $-32.9$ \\
\quad last & $-3.6$ & $-14.9$ & $-20.2$ & $-8.0$ & $-21.1$ & $-26.7$ & $-1.4$ & $\mathbf{+0.2}$ & $\mathbf{+0.0}$ \\
\midrule
\multicolumn{10}{@{}l}{\textit{CALVIN}} \\
\quad best & $-11.8$ & $-5.0$  & $-5.2$  & $-5.7$  & $-5.3$  & $-7.0$  & $\mathbf{+9.9}$ & $-6.6$ & $-7.8$ \\
\quad even & $-14.7$ & $-8.8$  & $-5.3$  & $-16.1$ & $-6.8$  & $-6.8$  & $\mathbf{+7.4}$ & $-14.1$ & $-14.4$ \\
\quad last & $\mathbf{+2.9}$ & $-23.2$ & $-27.2$ & $\mathbf{+0.6}$ & $-20.9$ & $-26.1$ & $\mathbf{+20.7}$ & $-22.7$ & $-23.5$ \\
\bottomrule
\end{tabular*}
\begin{tablenotes}[flushleft]
\footnotesize
\item \textbf{Fusion cannot reliably beat the single optimal layer.} Each entry is (that fusion method's
success rate under that layer configuration) $-$ (the success rate of the optimal layer in Sec. ~\ref{sec:layer_choice_oracle}), in percentage points, both averaged over the same $3$ training
seeds. Negative means the single best layer wins. The exceptions where the single best layer loses are discussed in ~\ref{sec:paradigm:exception}.
\end{tablenotes}
\end{threeparttable}
\end{table*}

\subsection{Statistical Analysis of Fusion's Advantage}

Table ~\ref{tab:fusion_vs_best} considers the average performance difference while ignoring the performance variation of each fusion methods or single optimal layer. Appendix ~\ref{app:stats_fusion_analysis} mitigates this gap. We conduct one-sided Welch’s two-sample \(t\)-test ~\cite{welch1947generalization}to rigoroursly verify the advantage of each fusion method over the single optimal layer while considering the performance variations. The main result is reported in Table ~\ref{tab:fusion_pvalues}. The advantage of each fusion method is coupled with very low statistical confidence while APT with CALVIN benchmark is the only exception.

\subsection{Exception Discussion}
\label{sec:paradigm:exception}
In Table ~\ref{tab:fusion_vs_best}, 5 of the 7 fusion-win exceptions use APT as the fusion method. There are two changing factors nested in the comparison between APT and GR00T's single-layer policy: (A) APT introduces an action-head parameterization different than the default one of GR00T; such architecture change benefits downstream performance. (B) APT uses a fusion tuple of multiple layers as input instead of a single layer; this fusion benefits downstream performance. Our experiment in Appendix ~\ref{app:exception_analysis} decouples these two factors. The advantage of APT observed in Table ~\ref{tab:fusion_vs_best} can be obtained with only the architecture-change factor (A) while excluding the fusion factor. Figure ~\ref{fig:apt_repeat_vs_groot} reports this fact. This suggests that architecture change is the major contributor instead of fusion.

\section{Layer Selection for Single-layer Paradigm}
\label{Sec:layer_choice}

\subsection{Layer-Quality Oracle}
\label{sec:layer_choice_oracle}

\begin{insight}
Naive Layer-selection with benchmark trainings and evaluations across every layer is computationally expensive and requires a non-trivial cheap proxy.
\end{insight}
This section specifies how we estimate the oracle quality of each layer, which utilizes computationally-expensive estimation scheme. A decent layer-selection scheme should have metrics that are a cheap proxy for such oracle quality.

Given a benchmark and a VLM backbone with $D$ layers, we sweep through the layer indices $\{1, \cdots, D \}$. For each layer with index $i$, we train the action head by taking layer $i$ as a single-layer latent interface using a benchmark's training set. \emph{We treat the benchmark evaluation success rate of the trained VLA network as the oracle quality of such layer $i$.} In this work, two benchmarks are adopted: LIBERO and CALVIN. All other engineering choices other than the layer number are controlled to be identical. For each layer $i$, the above training and evaluation are repeated with $3$ random seeds to remove the effects of noise. To make our results more base-model robust, we repeat layer-sweep experiment above with three VLM backbones: Cosmos-Reason-2B, Qwen-2B, and Abot-pretrain. We further define the optimal layer as the single layer with the highest oracle quality.
 
\begin{table*}[t]
\centering
\begin{minipage}[t]{0.44\textwidth}
\centering
\footnotesize
\begin{threeparttable}
\caption{GPU-Hours}
\label{tab:layer_sweep_cost}
\setlength{\tabcolsep}{3pt}
\begin{tabular}{@{}lccc@{}}
\toprule
& \shortstack{Cosmos-2B\\(28L)} & \shortstack{Qwen-2B\\(28L)} & \shortstack{Abot-4B\\(36L)} \\
\midrule
\multicolumn{4}{@{}l}{\textit{LIBERO}} \\
\quad Oracle  & 129.1 & 127.4 & 152.5 \\
\quad InfoNCE &  10.6 &  14.1 &   4.6 \\
\midrule
\multicolumn{4}{@{}l}{\textit{CALVIN}} \\
\quad Oracle  & 119.1 &  94.0 & 133.7 \\
\quad InfoNCE &   7.4 &   7.5 &  13.4 \\
\bottomrule
\end{tabular}
\begin{tablenotes}[flushleft]
\footnotesize
\item \textbf{InfoNCE sweep is an order of magnitude cheaper than oracle sweep}. Each entry is total GPU-hours across the $3$ seeds of that sweep. One GPU-hour is one H100 device for one hour. See Sec.~\ref{sec:layer_choice_oracle} for the definition of oracle sweep.
\end{tablenotes}
\end{threeparttable}
\end{minipage}\hfill
\begin{minipage}[t]{0.54\textwidth}
\centering
\footnotesize
\begin{threeparttable}
\caption{Optimal layer per benchmark and base model}
\label{tab:optimal_layer}
\setlength{\tabcolsep}{3pt}
\begin{tabular}{@{}lccc@{}}
\toprule
& \shortstack{Cosmos-2B\\(28L)} & \shortstack{Qwen-2B\\(28L)} & \shortstack{Abot-4B\\(36L)} \\
\midrule
\multicolumn{4}{@{}l}{\textit{LIBERO}} \\
\quad Optimal layer $l^\star$        & 15   & 3    & 31   \\
\quad Succ. at $l^\star$ (\%)      & 76.3 & 76.8 & 98.4 \\
\quad Succ. at deepest (\%)  & 62.9 & 55.6 & 96.6 \\
\midrule
\multicolumn{4}{@{}l}{\textit{CALVIN}} \\
\quad Optimal layer $l^\star$        & 11   & 16   & 12   \\
\quad Succ. at $l^\star$ (\%)      & 64.4 & 64.7 & 66.1 \\
\quad Succ. at deepest (\%)  & 39.6 & 36.0 & 48.8 \\
\bottomrule
\end{tabular}
\begin{tablenotes}[flushleft]
\footnotesize
\item \textbf{The optimal layer is never the deepest layer, and its depth is not transferable
across backbones or benchmarks.} The optimal layer $l^\star$ is the layer with the highest oracle
quality (Sec.~\ref{sec:layer_choice_oracle})
\end{tablenotes}
\end{threeparttable}
\end{minipage}
\end{table*}
For the rest of this paper, we will refer to such layer-sweep experiment with full-scale training and evaluation as \emph{ oracle sweep}. It is expensive in terms of both time and computation. We report the computational cost of oracle sweep against infoNCE sweep, a proxy of oracle sweep to be discussed later, in terms of GPU-Hour in Table ~\ref{tab:layer_sweep_cost}. The heavy expense partially originates from the fact that most layers present large cross-seed variance, which makes multiple-seed trials necessary for each layer(See Fig. ~\ref{fig:success_vs_layer}). Note that oracle sweep is still itself a proxy for each layer's ground-truth quality since our evaluation of each layer's trained policy only resides in simulation environment. The ground-truth quality of each layer involving hardware estimation will be even more expensive than Table ~\ref{tab:layer_sweep_cost} suggests. This further motivates the significance of a cheap proxy scheme that can accurately locate the optimal layer.

Table~\ref{tab:optimal_layer} reports
the optimal layer for every benchmark--base-model pair, together with the success rate the same
backbone attains at its deepest layer. We present two observations. First, the optimal layer is shallower than the deepest
layer. Thus, the selection of the optimal layer can offer us faster inference speed by discarding unnecessary layers beyond the optimal one. This further confirms the benefits of a layer-selection scheme. Second, the range of the optimal layer varies widely without any pattern that is easily recognizable. Switching the benchmark from LIBERO to CALVIN drifts the optimal layer of Qwen-2B from 3 to 16. Given the same LIBERO benchmark, Abot prefers the deeper layers(31 out of 36) while Cosmos-2B prefers the mid layers(15 out of 28). This further suggest that the selection of VLA's optimal layer is highly non-trivial.

\subsection{InfoNCE as Layer-Quality Proxy}
\label{sec:infonce_theory}
\begin{insight}
InfoNCE and action prediction are two learning tasks whose information objectives are defined with the same quantities.
\end{insight}
Oracle sweep is an iteration whose unit-level process is an estimation of quality for one layer, which we refer to as oracle quality estimation(Sec. ~\ref{sec:layer_choice_oracle}). Thus, a cheap proxy metrics for one layer's oracle quality is necessary for a layer-selection scheme that can replace oracle sweep. We prepare a list candidates of such metrics:
infoNCE ~\cite{oord2019representationlearningcontrastivepredictive} , DiME ~\cite{skean2023dimemaximizingmutualinformation}, Prompt-level Matrix Entropy ~\cite{skean2025layer}, Dataset-level Matrix Entropy ~\cite{skean2025layer}.
Note that each metrics consumes much lower computing power because they are either training-free or training-light. The GPU-Hour to sweep layers for infoNCE estimation is reported in Table ~\ref{tab:layer_sweep_cost}, which is of 10-times smaller than oracle sweep.

To empirically verify how each metric correlates with oracle quality, we present the following experiment. Given a VLM backbone, we sweep through its layers and measure the metrics value for each layer, which will present us a Metrics Value v.s. Layer Depth curve for each metric. Then we compute the Spearman correlation between such curve and the Oracle Quality v.s. Layer Depth curve obtained from the oracle sweep. The main result is reported in Table ~\ref{tab:metric_correlation}.

InfoNCE turns out to be the most reliable predictor. To theoretically interpret the predictive power of infoNCE, consider the following theorem:

\begin{theorem}[IB objective Equivalence]
\label{Thm:IB_eq}
Recall the IB objective for action prediction(Prop. ~\ref{prop1:info_action_objective}) and the IB objective for infoNCE(Prop. ~\ref{prop2:infonce_objective}). If $\beta < \frac{1}{N-1}$, then
\begin{equation}
\begin{aligned}
&\max_{\phi} I(Z';A', Y) - \beta \; I(Z'; X' |A', Y) \\
\equiv\;  & \max_{\phi} I(Z, A) - \frac{\beta \; N}{1 - \beta \; (N-1)} \; I(Z, X|A)
\end{aligned}
\end{equation}
\end{theorem}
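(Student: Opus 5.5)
The plan is to compute both terms of the InfoNCE objective in Proposition~\ref{prop2:infonce_objective} explicitly in terms of $I(Z;A)$ and $I(Z;X\mid A)$, where $Z=\phi(X)$. This turns the InfoNCE objective into a positive multiple of the action-prediction objective in Proposition~\ref{prop1:info_action_objective}, with a rescaled multiplier. Throughout I use the sampling structure defining $(X',A',Y)$. Conditional on $Y=y$, the pair $(X_y,A')\sim P_{X,A}$, the other $X_i$ are i.i.d.\ $P_X$ and independent of $(X_y,A')$, and $Z_i=\phi(X_i)$. I also use that $\phi$ is deterministic, so $Z - X - A$ is a Markov chain.

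\textbf{Step 1 (signal term).} I split $I(Z';A',Y)=I(Z';Y)+I(Z';A'\mid Y)$ by the chain rule.
\begin{itemize}
\item Marginalizing out $A'$, each $X_i$ has law $P_X$ and the $X_i$ are independent, whatever the value of $Y$. Hence $Z'$ is i.i.d.\ $P_Z$ independently of $Y$, and $I(Z';Y)=0$.
\item Conditional on $Y=y$, $A'$ depends on $Z'$ only through $Z_y$, and $(Z_y,A')\sim P_{Z,A}$. Since the remaining coordinates are independent of $(Z_y,A')$, we get $I(Z';A'\mid Y=y)=I(Z_y;A')=I(Z;A)$ for every $y$.
\end{itemize}
So $I(Z';A',Y)=I(Z;A)$.

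\textbf{Step 2 (noise term).} Conditional on $(A',Y=y)$, the pairs $(X_i,Z_i)$ are mutually independent across $i$, so the conditional mutual information is additive:
\[
I(Z';X'\mid A',Y)=I(Z_y;X_y\mid A')+\sum_{i\neq y}I(Z_i;X_i)=I(Z;X\mid A)+(N-1)\,I(Z;X).
\]
For the unconditioned terms, the chain rule gives $I(Z;X,A)=I(Z;A)+I(Z;X\mid A)$. Because $Z$ is a function of $X$, we also have $I(Z;X,A)=I(Z;X)$. Hence $I(Z;X)=I(Z;A)+I(Z;X\mid A)$.

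\textbf{Step 3 (assembling).} Substituting Steps 1 and 2 into the objective of Proposition~\ref{prop2:infonce_objective} gives
\[
\bigl(1-\beta(N-1)\bigr)I(Z;A)-\beta N\, I(Z;X\mid A).
\]
When $\beta<\frac{1}{N-1}$, the leading coefficient is strictly positive. Dividing by it leaves the set of maximizers $\phi$ unchanged, and yields $I(Z;A)-\frac{\beta N}{1-\beta(N-1)}I(Z;X\mid A)$, which is the claimed equivalence. If $\beta\ge\frac{1}{N-1}$, the sign flips or the signal term vanishes, which explains the hypothesis.

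\textbf{Main obstacle.} I expect the main difficulty to be Step 2, specifically justifying the additivity of conditional mutual information across the $N$ coordinates given $(A',Y)$. This needs a careful statement that the joint conditional law factorizes as $P_{X,Z\mid A'}\otimes\bigotimes_{i\neq y}P_{X,Z}$. There is also a technical point: for continuous $X$ with deterministic $\phi$, $I(Z;X)$ may be infinite. I would handle this by assuming the relevant quantities are finite, e.g.\ discrete or quantized latents. With that assumption, all manipulations are valid chain-rule identities and no subtraction of infinities occurs.
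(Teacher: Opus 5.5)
Your proposal is correct and follows the paper's proof essentially step for step. The steps match: the chain-rule split $I(Z';A',Y)=I(Z';Y)+I(Z';A'\mid Y)$ with $Z'\perp Y$ and the positive pair reducing to $I(Z;A)$; additivity of $I(Z';X'\mid A',Y)$ over the conditionally independent blocks, giving $I(Z;X\mid A)+(N-1)I(Z;X)$; the identity $I(Z;X)=I(Z;A)+I(Z;X\mid A)$ from determinism of $\phi$; and division by the positive coefficient $1-\beta(N-1)$. Your extra finiteness assumption (e.g.\ discrete or quantized latents, since $I(Z;X)$ can be infinite for continuous $X$ and deterministic $\phi$) is a legitimate rigor point that the paper's proof leaves implicit.
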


\begin{proof}
See Appendix ~\ref{appendix:thm_ib_eq_proof}.
\end{proof}

Theorem ~\ref{Thm:IB_eq} is arguing that the IB objective for infoNCE (Prop. ~\ref{prop2:infonce_objective}) is a reweighted version of the IB objective of action prediction(Prop. ~\ref{prop1:info_action_objective}). Both IB objectives are defined by the same information-theoretical quantities $I(Z, A)$ and $I(Z, X|A)$.

Our Theorem gives InfoNCE a different reading from previous works ~\cite{oord2019representationlearningcontrastivepredictive, poole2019variationalboundsmutualinformation}. Previous researchers view infoNCE as an lower-bound estimation for mutual information
\begin{equation}
\label{eq:infoNCE_old_view}
I(Z, A) \geq \log N - \mathcal{L}_{\text{infoNCE}}(\phi, h)
\end{equation}

Remarkably, such bound becomes tight when $N$ is sufficiently large ~\cite{pmlr-v139-sordoni21a} ~\cite{oord2019representationlearningcontrastivepredictive}. Appendix ~\ref{app:bound_analysis} empirically confirms the tightness of bound ~\ref{eq:infoNCE_old_view} under the interpretation of ~\cite{oord2019representationlearningcontrastivepredictive}. However, such perspective cannot cleanly explain infoNCE's strong correlation with oracle quality.
For the $l$th-layer latent and its subsequent layer of a VLM, we have $I(Z_l, A) \geq I(Z_{l+1}, A)$, which immediately follows by the determinism of all layers and data processing theorem. That means, given the tightness of bound ~\ref{eq:infoNCE_old_view}, we would always find $\log N - \mathcal{L}_{\text{infoNCE}}(\phi_l, h)$ monotonically decreases with layer number $l$. In our experiment, we find this to be not true. In fact, we find that $\log N - \mathcal{L}_{\text{infoNCE}}(\phi_l, h)$ can closely track the trend of Oracle Quality v.s. layer number which is NOT monotonically decreasing(Fig. ~\ref{fig:success_vs_infoNCE_grid}). This suggests some imperfectness in the optimization of infoNCE loss, and more importantly, such imperfection is shared by both infoNCE optimization and VLA training. Prop. ~\ref{prop2:infonce_objective} and Theorem ~\ref{Thm:IB_eq} provide an information-theoretical lens to interpret such imperfection. Information bottleneck frames these imperfections as task-irrelevant noises which can be reweighted to be equivalent for action prediction and infoNCE. We note that there might be other non information-theoretical reasons for such imperfectness.

\begin{figure*}[t]
  \centering
  \includegraphics[width=\linewidth]{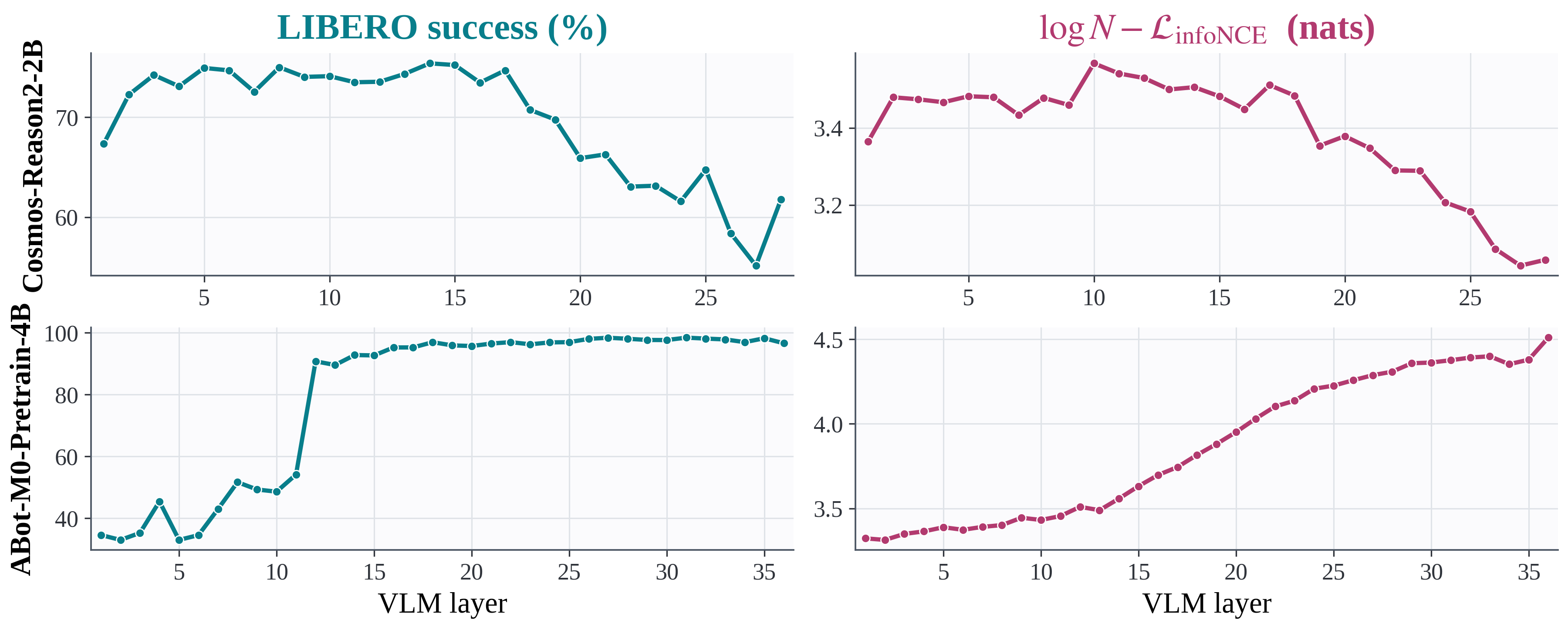}
  \caption{\textbf{The $\log N - \mathcal{L}_{\text{infoNCE}}$-versus-layer curve closely tracks the non-monotone LIBERO-success-versus-layer curve.} The left column shows LIBERO success and the right column shows InfoNCE for Cosmos-2B (top) and Abot-Pretrain (bottom).}
\label{fig:success_vs_infoNCE_grid}
\end{figure*}

\begin{table*}[t]
\centering
\footnotesize
\begin{threeparttable}
\caption{Spearman correlation of proxy metrics with oracle success across layers}
\label{tab:metric_correlation}
\setlength{\tabcolsep}{3pt}
\renewcommand{\arraystretch}{1.15}
\begin{tabular*}{\textwidth}{@{\extracolsep{\fill}}lccc@{}}
\toprule
& \shortstack{Cosmos-2B\\(28L)} & \shortstack{Qwen-2B\\(28L)} & \shortstack{Abot-4B\\(36L)} \\
\midrule
\multicolumn{4}{@{}l}{\textit{LIBERO}} \\
InfoNCE                      & $\mathbf{+0.82}$ & $\mathbf{+0.85}$ & $\mathbf{+0.94}$ \\
DiME                         & $+0.72$          & $+0.62$          & $+0.85$ \\
Prompt-level entropy  & $-0.82$          & $-0.79$          & $-0.03$ \\
Dataset-level entropy & $-0.69$          & $-0.53$          & $+0.77$ \\
\midrule
\multicolumn{4}{@{}l}{\textit{CALVIN}} \\
InfoNCE                      & $\mathbf{+0.82}$ & $\mathbf{+0.72}$ & $+0.36$ \\
DiME                         & $+0.22$          & $+0.26$          & $+0.22$ \\
Prompt-level entropy         & $-0.41$          & $-0.28$          & $-0.75$ \\
Dataset-level entropy        & $-0.41$          & $-0.15$          & $\mathbf{-0.83}$ \\
\bottomrule
\end{tabular*}
\begin{tablenotes}[flushleft]
\footnotesize
\item \textbf{InfoNCE is the most consistent predictor across benchmarks and backbones.} Each cell reports the Spearman rank correlation $\rho$ between a metric's value-versus-layer curve and the oracle-quality-versus-layer curve. Bold marks the highest $|\rho|$ within each benchmark--backbone column. A larger $|\rho|$ indicates that the metric more reliably tracks policy success across layers.
\end{tablenotes}
\end{threeparttable}
\end{table*}

\subsection{Layer-Selection Scheme}
\label{sec:layer-selection-scheme}
\begin{insight}
InfoNCE-argmax selector can come close to the performance of the heuristics selector optimized with oracle data.
\end{insight}
We now test whether the predictive signal identified in Section~4.2 translates into effective layer selection.

Given our definition of the optimal layer and our insights of infoNCE, it follows naturally to design the layer selection scheme as an infoNCE argmax selector. Given a
benchmark and a base model, we sweep through layer indices $\{1, \cdots, D\}$. For each layer, we train infoNCE critic $h$ using the benchmark's data with three random seeds. The selected layer is the one maximizing the seed-averaged InfoNCE
estimate, $\hat{l} = \arg\max_l \{ \log N - \mathcal{L}_{\text{infoNCE}}(Z_l) \}$ where $\mathcal{L}_\text{infoNCE}(Z_l)$ is the infoNCE loss after optimizing the critic for layer $l$. We will refer to such scheme as infoNCE-argmax.

Wo compare the regrets infoNCE with different constant-layer heuristics in Table ~\ref{tab:layer_selection_regret}. Here, we define regrets as the performance difference between the layer selected by the scheme and the optimal layer. By constant-layer heuristics, we mean the layer-selection strategy that chooses layer as a constant number or relative depth. The last-layer heuristics reflects the arguments of ~\cite{yang2026abotm0vlafoundationmodel}.
We also include some two oracle constant-layer heuristics: one with the constant layer number optimized for minimum average regrets over all 6 settings, and one optimized for minimum max regrets over all 6 settings. These two oracle heuristics should serve as an upper-bound of layer-selection schemes because of their supreme knowledge of all the oracle layer qualities.

\begin{table*}[t]
\centering
\footnotesize
\begin{threeparttable}
\caption{Regret of layer-selection schemes}
\label{tab:layer_selection_regret}
\setlength{\tabcolsep}{3pt}
\renewcommand{\arraystretch}{1.12}
\begin{tabular*}{\textwidth}{@{\extracolsep{\fill}}lccc@{}}
\toprule
Regret (success \%) & \shortstack{Cosmos-2B\\(28L)} & \shortstack{Qwen-2B\\(28L)} & \shortstack{Abot-4B\\(36L)} \\
\midrule
\multicolumn{4}{@{}l}{\textit{LIBERO}} \\
InfoNCE-argmax           & 2.1 (L10) & 7.3 (L10) & 1.8 (L36) \\
Deepest layer            & 13.4 (L28) & 21.3 (L28) & 1.8 (L36)\\
Oracle constant, min-mean (L16)    & 4.0 & 8.3 & 3.2 \\
Oracle constant, min-max (L15)     & 0.0 & 6.4 & 5.8 \\
\midrule
\multicolumn{4}{@{}l}{\textit{CALVIN}} \\
InfoNCE-argmax           & 0.0 (L11) & 2.2 (L14) & 8.8 (L20) \\
Deepest layer            & 24.8 (L28) & 28.7 (L28) & 17.2 (L36) \\
Oracle constant, min-mean (L16)    & 2.1 & 0.0 & 2.0 \\
Oracle constant, min-max (L15)     & 2.7 & 4.0 & 2.1 \\
\bottomrule
\end{tabular*}
\begin{tablenotes}[flushleft]
\footnotesize
\item \textbf{InfoNCE-argmax can come close to the performance of the heuristics selector optimized based on oracle data.} Each non-fusion cell reports regrets in success percentage points; the selected layer is in parentheses. Regret = performance of optimal layer - performance of the selected layer. Constant-layer heuristics tap the same index in every setting. The two oracle-constant heuristics chooe that index by minimizing, over $l\in\{1,\dots,28\}$, either the mean regret across the six settings (L16: $3.28$ mean, $8.28$ worst case) or the worst-case regret (L15: $6.35$ worst case, $3.50$ mean).
\end{tablenotes}
\end{threeparttable}
\end{table*}

InfoNCE-argmax improves over deepest-layer selection in five of the six backbone--benchmark settings and ties it in the sixth. Averaged equally across these settings, its regret is $3.71$ points, compared with $17.89$ points for the deepest layer. Its proxy sweep also requires approximately $9$--$33$ times less GPU compute than the oracle sweep (Table ~\ref{tab:layer_sweep_cost}). It also comes close to the performance of the two oracle heuristics. The two oracle heuristics have average regrets of 3.28 percentage points and 3.50 comparing to 3.71 of infoNCE-argmax.

\section{Conclusion}
We studied how layer selection and layer fusion shape the latent interface of VLA policies with frozen backbones. Our experiments validate the limitation of fusion methods. For the single-layer paradigm, the best layer varies substantially across backbones and benchmarks, making efficient layer selection necessary. We derive a reweighting equivalence between our proposed information-bottleneck objectives for InfoNCE and action prediction, providing a theoretical motivation for using InfoNCE as a layer-quality proxy. We further empirically test the performance of infoNCE-argmax layer selection scheme.

Our results establish layer selection as a consequential design choice and InfoNCE as a practical tool for making that choice in the evaluated settings. Future work should examine how action-head capacity, backbone fine-tuning, and real-world deployment affect layer quality and the benefits of fusion.

\bibliography{ref}
\bibliographystyle{IEEEtran}

\appendix

\section{Theory Appendix}
\subsection{Proposition Reasoning}
\label{Appendix:defend_prop1}
The only difference between the optimization stated in Formula One of ~\cite{alemi2019deepvariationalinformationbottleneck} and our proposition ~\ref{prop1:info_action_objective} is the constraint term. We state $I(Z, X | A) \leq I_c$ while ~\cite{alemi2019deepvariationalinformationbottleneck} states $I(Z, X) \leq I_c$. Note that both constructions are alternative representations of information bottleneck. Such a switch is exactly the core argument made by ~\cite{kawaguchi2023doesinformationbottleneckhelp} who argues that $I(Z, X |A)$ can be used to construct a much tighter bound on generalization error than $I(Z, X)$.

\subsection{Theoretical Reasoning of Layer-Fusion}
\label{Sec:paradigm_choice_theory}

This section aims to theoretically reason about our observations in Sec. ~\ref{Sec:paradigm_choice_exp}. Note that each fusion method in Sec. ~\ref{Sec:paradigm_choice_exp} designs their fusion mechanism as an optimizable module of the action head. Concretely, APT models fusion of multiple layers as a learnable gate of the action head, while BEHAVIOR-1st's combination weights are also optimizable during the training of the action head. Thus, it is reasonable to interpret their latent interface as a concatenation of the selected layers while the fusing operations are part of the action head. With that in mind, consider the following theorem:

\begin{theorem}
\label{Thm:layer_concat}[Shallow-Layer Equivalence]
Suppose a VLM backbone of $D$ layers has layer-indices $\{1, 2, 3, \cdots,D \}$. Let $\mathcal{C}$ be the set of selected layers, i.e. $\mathcal{C} \subset \{1, 2, 3, \cdots,D \}$. Let $Z_{\mathcal{C}}$ be a tuple concatenating layer outputs specified by $\mathcal{C}$, $Z_{\mathcal{C}} = (Z_l)_{l \in  \mathcal{C}}$. Let $l^*$ be the smallest element of $\mathcal{C}$, i.e. $l^* = \min_{l\in\mathcal{C}} l$. Then,

\begin{equation}
I(Z_{\mathcal{C}}, A) = I(Z_{l^*}, A), \; I(Z_{\mathcal{C}}, X |A) = I(Z_{l^*}, X |A)
\end{equation}
\end{theorem}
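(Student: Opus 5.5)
The whole proof rests on determinism. Let $\Phi_{l^*\to l} = \psi_l \circ \cdots \circ \psi_{l^*+1}$ denote the composition of the layers strictly after $l^*$ up to $l$, with $\Phi_{l^*\to l^*}$ the identity. Since $l^* = \min_{l\in\mathcal{C}} l$, every $l\in\mathcal{C}$ satisfies $l\ge l^*$. Hence $Z_l = \Phi_{l^*\to l}(Z_{l^*})$ for all $l\in\mathcal{C}$, and the concatenation can be written as
\begin{equation*}
Z_{\mathcal{C}} = T(Z_{l^*}), \qquad T(z) := \big(\Phi_{l^*\to l}(z)\big)_{l\in\mathcal{C}},
\end{equation*}
where $T$ is a deterministic measurable map. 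In the other direction, $l^*\in\mathcal{C}$, so $Z_{l^*}$ is a coordinate projection of $Z_{\mathcal{C}}$. Thus $Z_{l^*} = \pi(Z_{\mathcal{C}})$ for a deterministic $\pi$. In other words, $Z_{\mathcal{C}}$ and $Z_{l^*}$ are deterministic functions of each other, so they generate the same $\sigma$-algebra.

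For the first identity I apply the data processing inequality in both directions, exactly as the paper does when it argues $I(Z_l,A)\ge I(Z_{l+1},A)$. The chain $A \to Z_{l^*} \to Z_{\mathcal{C}}$ is Markov because $Z_{\mathcal{C}} = T(Z_{l^*})$, which gives $I(Z_{\mathcal{C}},A)\le I(Z_{l^*},A)$. The chain $A\to Z_{\mathcal{C}}\to Z_{l^*}$ is Markov because $Z_{l^*}=\pi(Z_{\mathcal{C}})$, which gives the reverse inequality. Equality follows. Equivalently, I could use the chain rule $I(Z_{\mathcal{C}},A) = I(Z_{l^*},A) + I(Z_{\mathcal{C}\setminus\{l^*\}},A\mid Z_{l^*})$; the second term vanishes because the remaining coordinates are a function of $Z_{l^*}$.

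For the conditional term I use the conditional version of the same argument. For every value $a$, conditional on $A=a$, the relations $Z_{\mathcal{C}}=T(Z_{l^*})$ and $Z_{l^*}=\pi(Z_{\mathcal{C}})$ still hold pointwise. So the conditional data processing inequality applied to $X\to Z_{l^*}\to Z_{\mathcal{C}}$ and to $X\to Z_{\mathcal{C}}\to Z_{l^*}$ given $A$ yields both inequalities for $I(\cdot, X\mid A)$. Alternatively, both sides can be expanded as $I(Z,X\mid A) = I(Z,(X,A)) - I(Z,A)$, and each term is invariant under the bijective relabeling $Z_{l^*}\leftrightarrow Z_{\mathcal{C}}$.

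The only real subtlety is measure-theoretic. The layer outputs are continuous, and with deterministic $\phi$ one may have $I(Z,X)=\infty$, so an identity such as $I(Z,X\mid A)=H(Z\mid A)$ or a difference of infinite quantities would be ill-defined. To avoid this I would phrase everything through data processing, which holds for general mutual information defined as a supremum over finite partitions or as a KL divergence, and never subtract possibly infinite terms. Under that phrasing the two inequalities give equality in $[0,\infty]$ without any finiteness assumption. Measurability of $T$ and $\pi$ is immediate, since each layer $\psi_i$ is continuous.
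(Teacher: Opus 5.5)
Your proposal is correct and follows the paper's proof essentially step for step. The paper likewise writes $Z_{\mathcal{C}}=h(Z_{l^*})$ and $Z_{l^*}=\pi(Z_{\mathcal{C}})$ via the partial compositions $\psi_{l^*\to l}$, applies the data processing inequality in both directions to get $I(Z_{\mathcal{C}};A)=I(Z_{l^*};A)$, and then applies the conditional data processing inequality given $A=a$ before averaging over $a\sim P_A$; your remark about obtaining equality in $[0,\infty]$ without subtracting possibly infinite terms is a sensible refinement that the paper leaves implicit.
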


\begin{proof}
See Appendix. ~\ref{appendix:thm_layer_concat_proof}.
\end{proof}

Recall from Prop. ~\ref{prop1:info_action_objective} that we have two information quantities of interest for a given latent interface: $I(Z, A)$ and $I(Z, X | A)$. Theorem ~\ref{Thm:layer_concat} argues that a tuple of layers, $Z_{\mathcal{C}}$, holds these two quantities exactly the same as the single shallowest layer in the tuple, $Z_{l^*}$. Thus, fusion of a set of layers is information-theoretically equivalent as the shallowest layer in the set, which resolves why fusion cannot beat the single optimal layer under most situations in Table ~\ref{tab:fusion_vs_best}.

In Table ~\ref{tab:fusion_delta_all}, we report the difference between each fusion method's performance and that of the single shallowest layer. In roughly half of the cells(26 out of 54) in Table ~\ref{tab:fusion_delta_all}, fusion cannot beat the shallowest layer, which is consistent with Theorem ~\ref{Thm:layer_concat}. The other half of the cells in Table ~\ref{tab:fusion_delta_all}, where fusion beats the its shallowest constituent, suggests that fusion can still help downstream performance from perspectives other than information theory. 

Prop.~\ref{prop1:info_action_objective} only captures how much information is encoded in the latent interface. They do not capture the decodability of the information ~\cite{xu2020theoryusableinformationcomputational}. Fusion may still help through non information-theoretical properties, such as the geometry of the latent space, its linear decodability, or its interaction with the action head's optimization dynamics. Whether fusion offers any advantage along these axes remains open. Our research should further motivate the community to re-consider the roles of fusion instead of denying fusion completely. As part of our future work, we will further extend our theoretical framework to consider the limited functional capacity of the action head following ~\cite{xu2020theoryusableinformationcomputational}.

\subsection{Proof for Theorem ~\ref{Thm:layer_concat}}
\label{appendix:thm_layer_concat_proof}
\begin{proof}
Write $\mathcal{C} = \{l^* = l_1 < l_2 < \cdots < l_k\}$. For any $l \geq l^*$,
define the partial composition
$\psi_{l^* \to l} := \psi_{l} \circ \psi_{l-1} \circ \cdots \circ \psi_{l^*+1}$,
with the convention $\psi_{l^* \to l^*} := \mathrm{id}$. By the assumption in
Sec.~\ref{sec:formulation} that every backbone layer $\psi_i$ is a
deterministic function, each $\psi_{l^* \to l}$ is a deterministic measurable
map, and since $Z_l = \psi_l \circ \cdots \circ \psi_1(X)$ we have
$Z_l = \psi_{l^* \to l}(Z_{l^*})$ for every $l \in \mathcal{C}$. Consequently
$Z_{\mathcal{C}} = h(Z_{l^*})$, where
$h(z) := \big(\psi_{l^* \to l}(z)\big)_{l \in \mathcal{C}}$. Conversely, letting
$\pi$ denote the coordinate projection onto the $l^*$ component of the tuple, we
have $Z_{l^*} = \pi(Z_{\mathcal{C}})$, because $l^* \in \mathcal{C}$ and
$\psi_{l^* \to l^*} = \mathrm{id}$. Thus $h$ and $\pi$ are two valid deterministic mappings. Based on this, we can conclude:
\begin{itemize}
\item $A \to Z_{l^*} \to Z_{\mathcal{C}}$ is a Markov chain and the data
processing inequality yields
$I(Z_{\mathcal{C}}; A) = I(h(Z_{l^*}); A) \leq I(Z_{l^*}; A)$;

\item $A \to Z_{\mathcal{C}} \to Z_{l^*}$ is a Markov chain and gives
$I(Z_{l^*}; A) = I(\pi(Z_{\mathcal{C}}); A) \leq I(Z_{\mathcal{C}}; A)$.

\end{itemize}
Combining the two inequalities establishes
$I(Z_{\mathcal{C}}; A) = I(Z_{l^*}; A)$.

Then, observe that
the maps $h$ and $\pi$ do not depend on $A$, so conditioning on $A = a$ leaves
the relations $Z_{\mathcal{C}} = h(Z_{l^*})$ and $Z_{l^*} = \pi(Z_{\mathcal{C}})$
intact for $P_A$-almost every $a$. Applying the conditional data processing
inequality to the conditional Markov chains $X \to Z_{l^*} \to Z_{\mathcal{C}}$
and $X \to Z_{\mathcal{C}} \to Z_{l^*}$ given $A = a$ yields
$I(Z_{\mathcal{C}}; X \mid A = a) = I(Z_{l^*}; X \mid A = a)$, and taking the
expectation over $a \sim P_A$ gives
$I(Z_{\mathcal{C}}; X \mid A) = I(Z_{l^*}; X \mid A)$.
\end{proof}

\subsection{Alternative Construction of Prop. ~\ref{prop2:infonce_objective}}
\label{app:alt_prop2_construct}
In this section, we discuss about alternative constructions of Prop. ~\ref{prop2:infonce_objective} and motivate why ~\ref{prop2:infonce_objective} is more reasonable. Recall that an information bottleneck objective involves two construction choices: task-relevant signal and task-irrelevant noise. 

One alternative choice of classification-relevant signal is $I(Z', Y |A')$. The quantity $I(Z';Y\mid A')$ measures the information supplied by the latent tuple about classification when the action is given, whereas $I(Z';A',Y)=I(Z';A')+I(Z';Y\mid A')$ additionally accounts for action information retained by the tuple. Our use of $I(Z';A', Y)$ is motivated by the practical preference of contrastive training to compact information like $A'$. Classification training can retain information that does not affect its label-classification ~\cite{teterwak2021understanding}. 
 Studies of contrastive learning show that optimization bias can suppress features and favor low-rank representations \cite{xue2023features,jing2022collapse}. These results motivate a preference for compact conditional information like $A'$ that is classification-irrelevant.

One alternative construction of task-irrelevant noise is $I(Z'; X', A'|Y)$. This quantity measures the the joint information of $X'$ and $A'$ independent of $Y$. This construction assumes that a desirable $Z'$ should discard classification-irrelevant information in $A'$. We reject this construction because of the same reason as the classification-relevant term: $A'$, as a compact conditional signal, is preferred to be preserved by the training dynamics of contrastive learning.

\subsection{Proof for Theorem ~\ref{Thm:IB_eq}}
\label{appendix:thm_ib_eq_proof}
\begin{proof}
Since $\phi$ is deterministic,
$Z=\phi(X)$ gives $I(Z;A\mid X)=0$, hence by the chain rule
\begin{equation}
\label{eq:star}
I(Z;X)=I(Z;A)+I(Z;X\mid A). \tag{$\star$}
\end{equation}

\emph{Relevance term.} By the chain rule,
$I(Z';A',Y)=I(Z';Y)+I(Z';A'\mid Y)$. The distribution of $Z'$
is exchangeable in its coordinates and does not depend on which index is the
positive one, so $Z'\perp Y$ and $I(Z';Y)=0$.

Given
$(A',Y=y)$, the tuple $Z'$ splits into the positive sample $Z_y$, whose law
depends on $A'$, and the negative samples $Z_{\neq y}:=(Z_i)_{i\neq y}$, who is independent of both $A'$ and $Z_y$. In particular
\begin{equation}
\label{eq:neg_indep}
Z_{\neq y}\;\perp\;(Z_y,A')\quad\text{given } Y=y.
\end{equation}

Now apply the chain rule for mutual information to the tuple
$Z'=(Z_y,Z_{\neq y})$, conditioning throughout on $Y=y$:
\begin{equation}
\label{eq:chain_split}
\begin{aligned}
&I\big(Z';A'\mid Y=y\big) \\
= &I\big(Z_y;A'\mid Y=y\big)
+ I\big(Z_{\neq y};A'\mid Z_y,\,Y=y\big).
\end{aligned}
\end{equation}
By \eqref{eq:neg_indep},
$I(Z_{\neq y};A'\mid Z_y,Y=y)=0$. Therefore
\begin{equation}
\label{eq:reduce_to_pos}
I\big(Z';A'\mid Y=y\big)=I\big(Z_y;A'\mid Y=y\big).
\end{equation}

Conditioned on $Y=y$, the positive pair is generated by
$(X_y,A')\sim P_{X,A}$ followed by $Z_y=\phi(X_y)$; this is exactly the law of
$(Z,A)$ obtained by pushing $(X,A)\sim P_{X,A}$ through $\phi$. Moreover, the
value $y$ enters only as a coordinate relabeling and does not affect the joint
law of the positive pair, so $I(Z_y;A'\mid Y=y)$ does not depend on $y$ and
equals its unconditional counterpart:
\begin{equation}
\label{eq:pos_is_ZA}
I\big(Z_y;A'\mid Y=y\big)=I(Z;A).
\end{equation}
Combining \eqref{eq:reduce_to_pos} and \eqref{eq:pos_is_ZA} gives
$I(Z';A'\mid Y=y)=I(Z;A)$ for every $y\in\{1,\dots,N\}$.

Finally, mutual information conditioned on the random variable $Y$ is by
definition the average of the conditional-on-$\{Y=y\}$ values,
\begin{equation}
\label{eq:average_out_Y}
\begin{aligned}
& I\big(Z';A'\mid Y\big) \\
= &\sum_{y=1}^{N} P(Y=y)\,I\big(Z';A'\mid Y=y\big) \\
= &\sum_{y=1}^{N} \frac{1}{N}\,I(Z;A)
= I(Z;A),
\end{aligned}
\end{equation}

Together with $I(Z';Y)=0$ and the chain-rule expansion
$I(Z';A',Y)=I(Z';Y)+I(Z';A'\mid Y)$, this yields
\begin{equation}
\label{eq:relevance}
I(Z';A',Y)=I(Z;A).
\end{equation}

\emph{Compression term.} Given
$A'$ and $Y=y$, the coordinates $X_1,\dots,X_N$ are mutually independent
($X_y\sim P_{X\mid A'}$, $X_i\sim P_X$ for $i\neq y$), so the pairs
$(Z_i,X_i)$ form independent blocks and mutual information is additive across
them:
\begin{equation}
\label{eq:info-add}
\begin{aligned}
&I(Z';X'\mid A',Y=y) \\
& =\sum_{i=1}^N I(Z_i;X_i\mid A', Y=y) \\
& = \{ \sum_{i\neq y} I(Z_i;X_i\mid A', Y=y) \} + I(Z_y; X_y|A', Y=y)
\end{aligned}
\end{equation} The positive coordinate
contributes $I(Z_y;X_y\mid A',Y=y)=I(Z;X\mid A)$ since $(Z_y,X_y\mid A')\sim
P_{Z,X\mid A}$. Each negative coordinate has $X_i\perp A'$, so conditioning on
$A'$ is vacuous and $I(Z_i;X_i\mid A')=I(Z_i;X_i)=I(Z;X)$. Summing the one
positive and $N-1$ negative contributions and applying $(\star)$ to the
latter,
\begin{align}
I(Z';X'\mid A',Y)
&= I(Z;X\mid A)+(N-1)\,I(Z;X) \notag\\
&= N\,I(Z;X\mid A)+(N-1)\,I(Z;A).
\label{eq:compression}
\end{align}
This value is independent of $y$, so averaging over $Y$ leaves it unchanged.

\emph{Reduction of the objective.} Substituting \eqref{eq:relevance} and
\eqref{eq:compression} into the InfoNCE IB objective of
Prop.~\ref{prop2:infonce_objective},
\begin{align}
&\max_{\phi}\; I(Z';A',Y)-\beta\,I(Z';X'\mid A',Y) \notag\\
={}&\max_{\phi}\; I(Z;A)-\beta\big[N\,I(Z;X\mid A)+(N-1)\,I(Z;A)\big] \notag\\
={}&\max_{\phi}\; \big[1-\beta(N-1)\big]\,I(Z;A)-\beta N\,I(Z;X\mid A).
\label{eq:collected}
\end{align}
By hypothesis $\beta<\tfrac{1}{N-1}$, so the coefficient
$1-\beta(N-1)$ is strictly positive. Dividing the objective in
\eqref{eq:collected} by this positive constant does not change the maximizing
$\phi$, giving the equivalent problem
\begin{equation}
\max_{\phi}\; I(Z;A)-\frac{\beta N}{1-\beta(N-1)}\,I(Z;X\mid A).
\end{equation}
Since $1-\beta(N-1)>0$, the multiplier
$\tfrac{\beta N}{1-\beta(N-1)}$ is well-defined and positive, and the two
maximization problems share the same set of maximizers, which is the claimed
equivalence with $\alpha=\tfrac{\beta N}{1-\beta(N-1)}$ in
Prop.~\ref{prop1:info_action_objective}.
\end{proof}

\begin{table*}[t]
\centering
\begin{threeparttable}
\caption{Layer fusion minus its shallowest constituent layer, across benchmarks and base models}
\label{tab:fusion_delta_all}
\scriptsize
\setlength{\tabcolsep}{3pt}
\begin{tabular*}{\textwidth}{@{\extracolsep{\fill}}lccccccccc@{}}
\toprule
& \multicolumn{3}{c}{Cosmos-Reason2-2B} & \multicolumn{3}{c}{Qwen3-VL-2B} & \multicolumn{3}{c}{ABot-Pretrain-4B} \\
\cmidrule(lr){2-4} \cmidrule(lr){5-7} \cmidrule(lr){8-10}
Layer config & APT & VLA-Ad. & BEHAV. & APT & VLA-Ad. & BEHAV. & APT & VLA-Ad. & BEHAV. \\
\midrule
\multicolumn{10}{@{}l}{\textit{LIBERO}} \\
\quad best & $+7.2$ & $-2.0$ & $-3.1$  & $+2.2$ & $-2.2$ & $-3.6$ & $-0.6$ & $-0.7$ & $+0.3$ \\
\quad even & $+5.8$ & $+4.3$ & $-0.6$  & $-1.1$ & $-7.1$ & $+1.8$ & $+60.0$ & $+10.9$ & $+31.0$ \\
\quad last & $-1.7$ & $-4.8$ & $-10.2$ & $-3.2$ & $-2.5$ & $-8.1$ & $+0.5$ & $+1.0$ & $+0.8$ \\
\midrule
\multicolumn{10}{@{}l}{\textit{CALVIN}} \\
\quad best & $+5.7$ & $-0.9$ & $-1.2$  & $+14.4$ & $-1.4$ & $-3.0$ & $+20.6$ & $+0.8$ & $-0.5$ \\
\quad even & $+6.4$ & $+12.3$ & $+15.8$ & $+6.0$ & $+15.3$ & $+15.3$ & $+28.6$ & $+7.1$ & $+6.8$ \\
\quad last & $+6.7$ & $-4.7$ & $-8.7$  & $+1.7$ & $-3.0$ & $-8.2$ & $+27.5$ & $-4.6$ & $-5.4$ \\
\bottomrule
\end{tabular*}
\begin{tablenotes}[flushleft]
\footnotesize
\item Each entry is $\Delta$ = (that fusion method's success rate under that layer configuration)
$-$ (the single-layer success rate of the \emph{shallowest} layer in the fused set), in percentage
points, both averaged over $3$ training seeds. Positive means fusion beat its shallowest
constituent.
\end{tablenotes}
\end{threeparttable}
\end{table*}

\begin{figure*}[t]
  \centering
  \includegraphics[width=\linewidth]{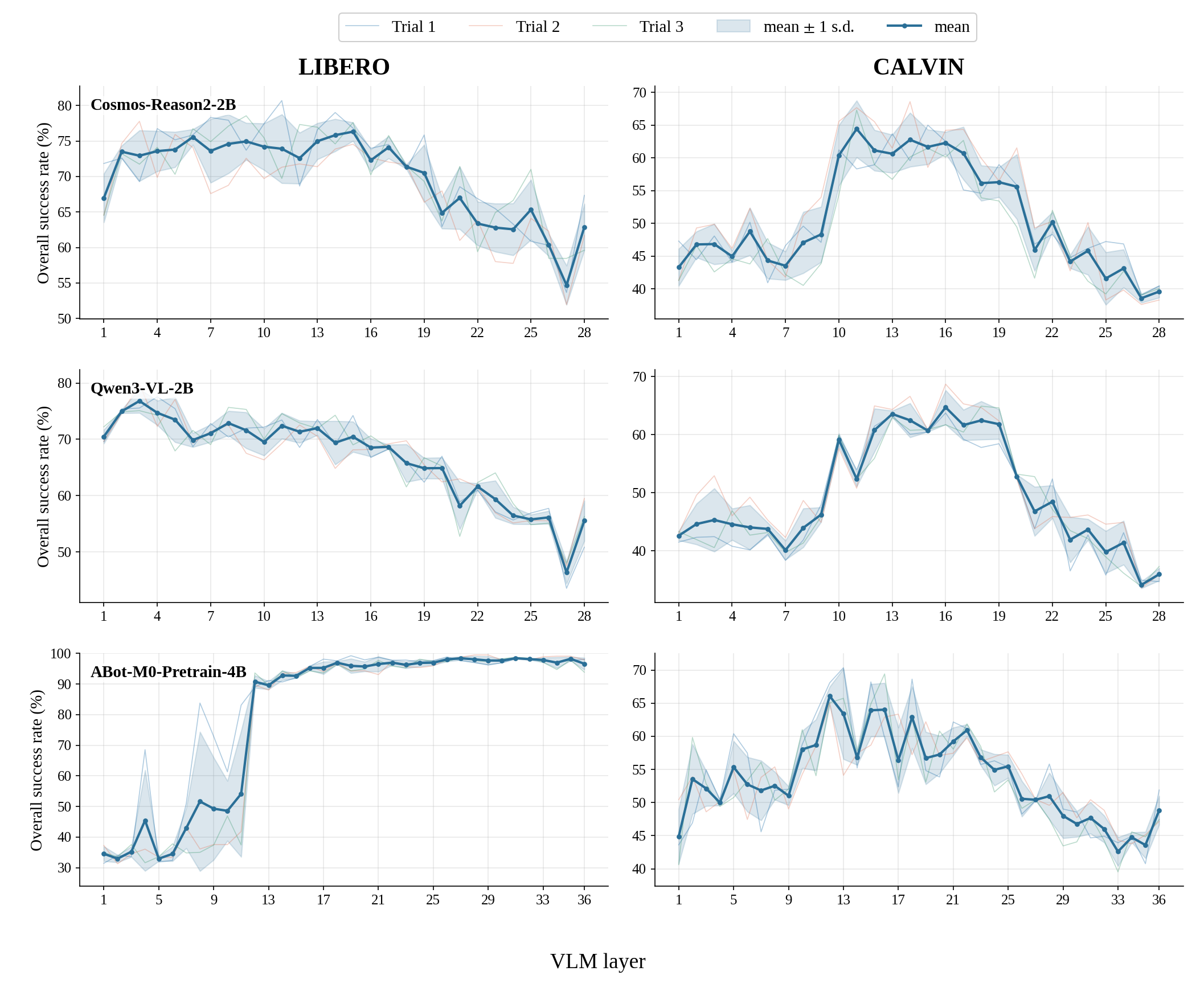}
  \caption{\textbf{Layer quality depends non-monotonically on depth and varies across training seeds.} Overall success rate versus VLM layer for three backbones (rows) on LIBERO (left) and CALVIN (right). Bold curves show the mean across three training seeds, shaded regions show one standard deviation, and faint curves show individual seeds. See Sec.~\ref{sec:layer_choice_oracle} for experiment details.}
  \label{fig:success_vs_layer}
\end{figure*}

\subsection{Exception Analysis}
\label{app:exception_analysis}
This section explains our experiment protocol for analyzing fusion-win exceptions in Table ~\ref{tab:fusion_vs_best}. 

We based our experiment on Abot-Pretrain base model and CALVIN benchmark because 5 out of 7 exceptions happen with CALVIN benchmark and 5 out of 7 happen with Abot-Pretrain base model.

To isolate the factors of architecture change and fusion inputs(Sec. ~\ref{sec:paradigm:exception}), we define the following ablated implementation of APT. APT-Repeat[$i$] has the exact action head architecture as standard APT but the action head's input space is just a tuple aggregating $m$ copies of a single-layer's outputs. $m$ is the number of layers allowed to be fused with the APT action head. Intuitively, each APT-Repeat[$i$] ablates the layer-fusion factor while keeping the architecture-change factor. We measure the performance of APT-Repeat[$i$] for all potential layers and compare it with the single-layer GR00T. The main result is reported in Fig.~\ref{fig:apt_repeat_vs_groot}.
We present two observations: (A) For each layer $i$, APT-REPEAT[$i$] consistently presents a higher CALVIN success rate than the single-layer GR00T action head trained with the corresponding layer. (B) The highest achievable performance with APT-REPEAT[$i$] is higher than APT-Last, APT-Best, and APT-even. Both observations imply that the performance gain of APT in Table ~\ref{tab:fusion_vs_best} can be obtained with architecture change exclusively without fusing any layer.

\begin{figure}[t]
  \centering
  \includegraphics[width=\linewidth]{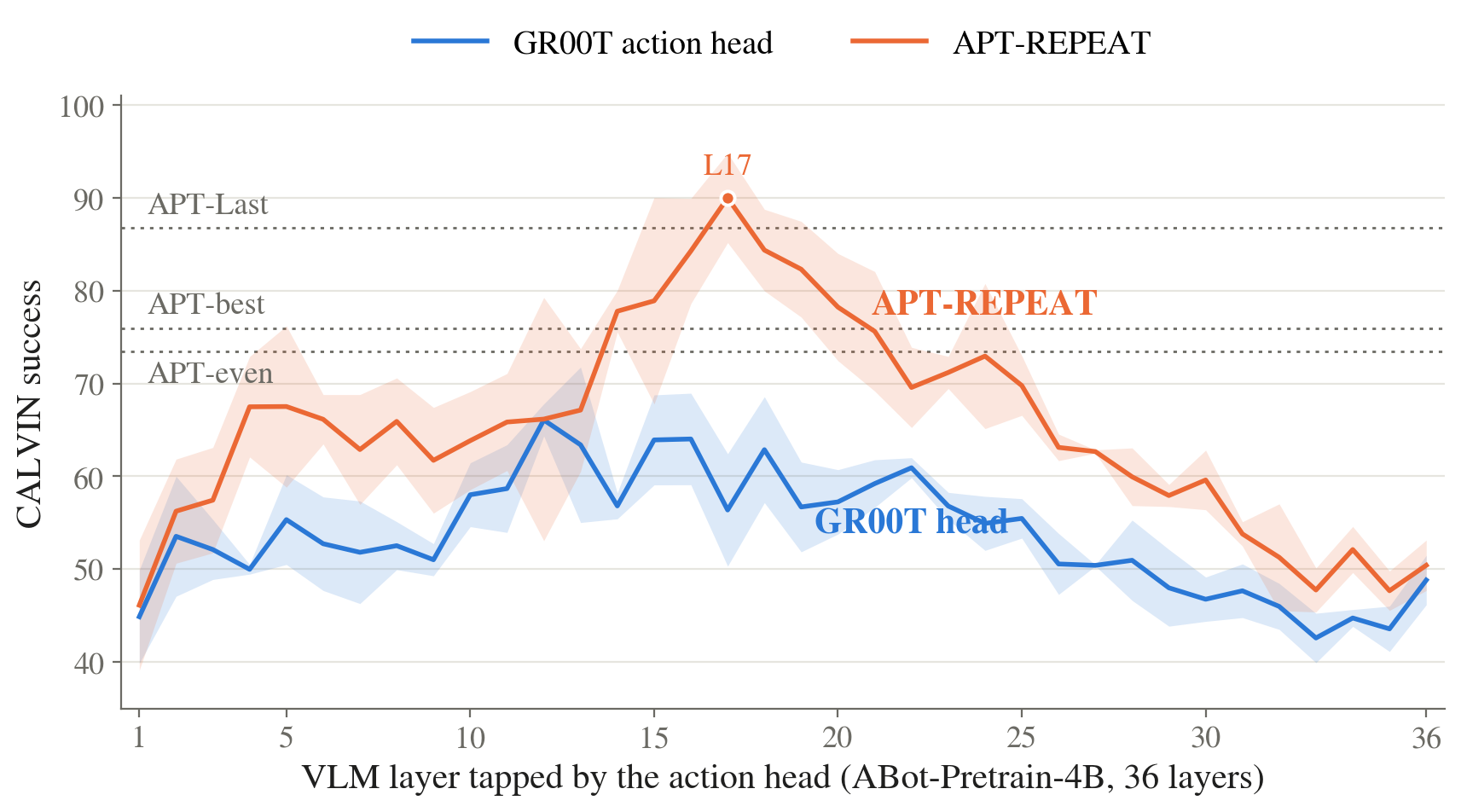}
  \caption{\textbf{APT-Repeat[$i$] versus the single-layer GR00T policy across all 36 layers of Abot-Pretrain on CALVIN.} Curves show the mean over the three policy seeds and shaded bands show one standard deviation. Success is reported in percent. Dotted lines mark the three APT fusion configurations (APT-Last, APT-best, APT-even) under the same backbone and benchmark. GR00T values are taken from the single-layer oracle sweep of Sec.~\ref{sec:layer_choice_oracle}; APT-Repeat values are measured with the protocol os Appendix ~\ref{app:exception_analysis}.}
  \label{fig:apt_repeat_vs_groot}
\end{figure}

\subsection{Tightness Analysis of Bound ~\ref{eq:infoNCE_old_view}}
\label{app:bound_analysis}
This section derives the finite-$N$ slack of bound~\eqref{eq:infoNCE_old_view} and reports its empirical estimate under the interpretation framework of ~\cite{oord2019representationlearningcontrastivepredictive}. Let $r(z,a)=\frac{p(z\mid a)}{p(z)}$ be the density ratio between $P_{Z\mid A}$ and $P_Z$. \cite{oord2019representationlearningcontrastivepredictive} show that the optimal critic satisfies $\exp h^\star(z,a)\propto r(z,a)$, with a constant that may depend on $a$, and that
\begin{equation}
\label{eq:slack_opt_loss}
\mathcal{L}_{\text{InfoNCE}}(\phi,h^\star)\approx\mathbb{E}_{P_{Z,A}}\left[\log\left(1+\frac{N-1}{r(Z,A)}\right)\right].
\end{equation}
Subtracting Eq.~\eqref{eq:slack_opt_loss} from $\log N$ and writing $r=r(Z,A)$,
\begin{equation}
\label{eq:slack_steps}
\begin{aligned}
\log N-\mathcal{L}_{\text{InfoNCE}}(\phi,h^\star)
&\approx \mathbb{E}_{P_{Z,A}}\left[\log N-\log\frac{r+N-1}{r}\right]\\
&= \mathbb{E}_{P_{Z,A}}\left[\log r-\log\frac{r+N-1}{N}\right]\\
&= \mathbb{E}_{P_{Z,A}}\left[\log r\right]\\
&\quad-\mathbb{E}_{P_{Z,A}}\left[\log\left(1+\frac{r-1}{N}\right)\right].
\end{aligned}
\end{equation}
The first line rewrites $1+\frac{N-1}{r}$ as $\frac{r+N-1}{r}$. The second line splits $\log N-\log\frac{r+N-1}{r}=\log r-\log\frac{r+N-1}{N}$, and the third uses $\frac{r+N-1}{N}=1+\frac{r-1}{N}$. The first expectation in the last line is the mutual information, since
\begin{equation}
\label{eq:slack_mi}
\mathbb{E}_{P_{Z,A}}\left[\log r(Z,A)\right]
=\mathbb{E}_{P_{Z,A}}\left[\log\frac{p(Z\mid A)}{p(Z)}\right]
=I(Z, A).
\end{equation}
Substituting Eq.~\eqref{eq:slack_mi} into Eq.~\eqref{eq:slack_steps} and rearranging gives
\begin{equation}
\label{eq:slack_def}
\begin{aligned}
&I(Z, A)-\big(\log N-\mathcal{L}_{\text{InfoNCE}}(\phi,h^\star)\big)\approx\Delta_N,\\
&\Delta_N=\mathbb{E}_{P_{Z,A}}\left[\log\left(1+\frac{r(Z,A)-1}{N}\right)\right].
\end{aligned}
\end{equation}
So $\Delta_N$ is the amount by which the InfoNCE score of the optimal critic falls below $I(Z, A)$. Its magnitude quantifies the tightness of the bound ~\ref{eq:infoNCE_old_view} under the interpretation framework of ~\cite{oord2019representationlearningcontrastivepredictive}.

\paragraph{Empirical expression.} We replace $r$ by the ratio induced by the fitted critic $h$. Because the proportionality constant depends on $a$, it is removed separately for every query. Let $\{(z_i,a_i)\}_{i=1}^{n}$ be held-out pairs not used to train $h$. For each query $a_i$ we draw one set of latents from $P_Z$, excluding $z_i$: a normalization set $\mathcal{N}^{\mathrm{norm}}_i$ of size $m_1$. The normalized ratios are
\begin{equation}
\label{eq:slack_critic_ratio}
\widehat{r}_{ij}=\frac{\exp h(z_j,a_i)}{\frac{1}{m_1}\sum_{k\in\mathcal{N}^{\mathrm{norm}}_i}\exp h(z_k,a_i)}.
\end{equation}
Plugging $\widehat{r}_{ii}$ into the slack definition in Eq.~\eqref{eq:slack_def} gives
\begin{equation}
\label{eq:slack_estimates}
\widehat{\Delta}_N=\frac{1}{n}\sum_{i=1}^{n}\log\left(1+\frac{\widehat{r}_{ii}-1}{N}\right)
\end{equation}

Fig.~\ref{fig:infonce_slack_abot_libero} reports $\widehat{\Delta}_N$ for LIBERO and Abot-Pretrain. Two empirical observations hold: (A) the slack sizes are empirically small comparing to absolute magnitudes of infoNCE score. More quantitatively, the ratio of \emph{slack / infoNCE score} falls within the range of $[0.067, 0.099]$ (B) The empirically estimated slacks do not change the overall shape of the \emph{infoNCE score v.s. layer} curve observed in fig. ~\ref{fig:success_vs_infoNCE_grid} -- its overall non-monotone-decreasing still holds even empirically estimated slacks are taken into accounts. 

\begin{figure}[t]
  \centering
  \includegraphics[width=0.85\linewidth]{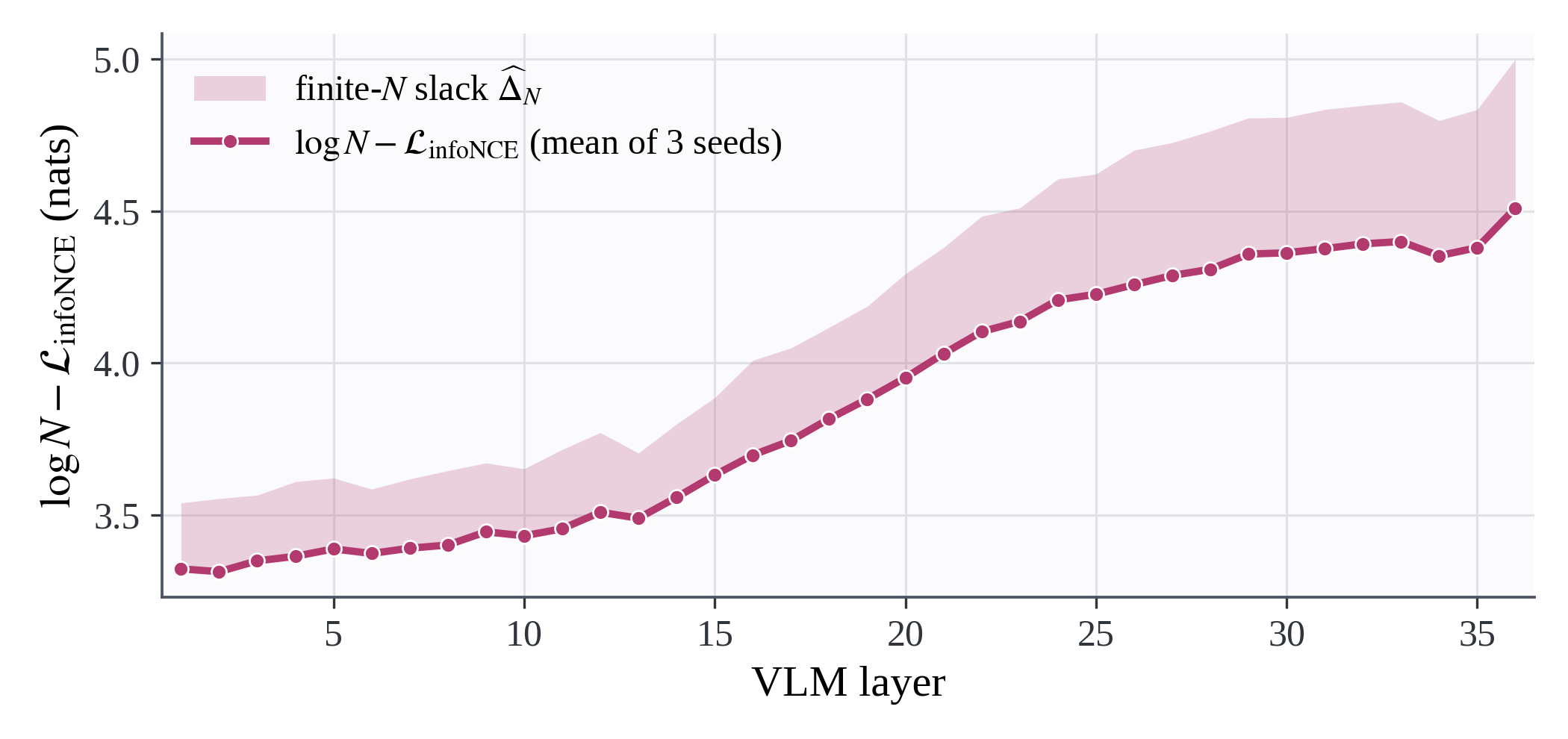}
  \caption{\textbf{Finite-$N$ slack of the InfoNCE score for Abot-Pretrain on LIBERO.} The line shows $\log N - \mathcal{L}_{\text{infoNCE}}$ averaged over three proxy seeds, and the shaded band extends it upward by the direct slack estimate $\widehat{\Delta}_N$ of Eq.~\eqref{eq:slack_estimates}, also averaged over the seeds. Smaller $\Delta_N$ means empirically tighter bound in Eq. ~\ref{eq:infoNCE_old_view}. Each critic uses $N=512$ and is scored on $n=16{,}384$ held-out pairs, with a normalization set of $m_1=15{,}383$ of the other $16{,}383$ held-out latents per query.}
  \label{fig:infonce_slack_abot_libero}
\end{figure}

\subsection{Statistical Test Measure of Fusion Advantage}
\label{app:stats_fusion_analysis}

This subsection explains how we test whether each fusion configuration outperforms the optimal single layer. We analyze each benchmark--backbone setting separately. Since outperforming the optimal layer requires outperforming every layer, we first compare fusion with each single layer and then combine the comparisons.

For a fixed fusion configuration $F$ and single layer $l$, let $\bar{S}_F$ and $\bar{S}_l$ denote their mean success rates across training runs. Let $s_F$ and $s_l$ denote their sample standard deviations, and let $n_F$ and $n_l$ denote their numbers of runs. Our experiments use $n_F=n_l=3$. We adapt the approach of Welch's two-sample $t$-test \cite{welch1947generalization}.

The test statistic is

$$
T_l =
\frac{\bar{S}_F-\bar{S}_l}
{\sqrt{s_F^2/n_F+s_l^2/n_l}}.
$$

This statistic measures the observed advantage of fusion relative to the uncertainty in that difference. A larger positive value provides stronger evidence that fusion outperforms layer $l$.

The cumulative distribution function can be written explicitly as

$$
F_{t_{\nu_l}}(T_l)
=
\frac{\Gamma\!\left((\nu_l+1)/2\right)}
{\sqrt{\nu_l\pi}\,\Gamma\!\left(\nu_l/2\right)}
\int_{-\infty}^{T_l}
\left(1+\frac{u^2}{\nu_l}\right)^{-(\nu_l+1)/2}\,du,
$$

where $\Gamma$ is the gamma function. Therefore, the one-sided p-value is

$$
\boxed{
p_l
=
\frac{\Gamma\!\left((\nu_l+1)/2\right)}
{\sqrt{\nu_l\pi}\,\Gamma\!\left(\nu_l/2\right)}
\int_{T_l}^{\infty}
\left(1+\frac{u^2}{\nu_l}\right)^{-(\nu_l+1)/2}\,du.
}
$$

This integral measures the probability of a test statistic at least as large as $T_l$ under equal expected performance. For fixed $\nu_l$, increasing $T_l$ reduces the remaining area under the curve, giving a smaller p-value and stronger evidence for fusion's advantage.

Under the Welch approximation, $p_l$ is the probability of obtaining a test statistic at least as large as the observed value if fusion and layer $l$ have equal expected performance. Thus, a small p-value indicates that the observed advantage would be unusual under equal performance.

We combine the comparisons using an intersection--union test:

$$
p_{\mathrm{overall}} = \max_l p_l.
$$

The maximum represents the least convincing comparison between fusion and a single layer. We conclude that fusion has a statistically significant advantage over the optimal layer when $p_{\mathrm{overall}}<0.05$. This requires every layer comparison to satisfy $p_l<0.05$. No additional correction across layers is required for this combined test.

Lower overall p-values indicate stronger statistical evidence for fusion's advantage, but do not measure the size of that advantage. A value above $0.05$ means that the available runs do not establish superiority; it does not establish equivalence or inferiority. With three runs per configuration, this analysis is approximate. We report nominal p-values for each fusion configuration, without adjustment across different fusion configurations.

\begin{table}[t]
\centering
\caption{\textbf{All fusion methods other than APT do not demonstrate statistically significant advantage over the single optimal layer.} Each cell reports the p-value for a fusion configuration compared with the optimal single layer. Lower values indicate stronger statistical evidence that fusion outperforms the optimal single layer; \(p<0.05\) indicates a statistically nominal advantage. See Appendix ~\ref{app:stats_fusion_analysis} for calculation of p-value.
}
\label{tab:fusion_pvalues}
\small
\begin{tabular}{@{}lllccc@{}}
\toprule
Benchmark & Backbone & Method & Best & Even & Last \\
\midrule
LIBERO & Cosmos & APT & .666 & .969 & .790 \\
 &  & VLA-Adapter & .877 & .985 & 1.000 \\
 &  & BEHAVIOR & .983 & .923 & 1.000 \\
\midrule
LIBERO & Qwen & APT & .900 & .990 & .870 \\
 &  & VLA-Adapter & .942 & .998 & .999 \\
 &  & BEHAVIOR & .902 & .898 & .999 \\
\midrule
LIBERO & ABot & APT & .934 & .996 & .941 \\
 &  & VLA-Adapter & .921 & 1.000 & .299 \\
 &  & BEHAVIOR & .564 & .907 & .457 \\
\midrule
CALVIN & Cosmos & APT & .945 & .999 & .223 \\
 &  & VLA-Adapter & .844 & .977 & .999 \\
 &  & BEHAVIOR & .863 & .882 & .999 \\
\midrule
CALVIN & Qwen & APT & .935 & .997 & .437 \\
 &  & VLA-Adapter & .894 & .993 & 1.000 \\
 &  & BEHAVIOR & .984 & .987 & 1.000 \\
\midrule
CALVIN & ABot & APT & .057 & .081 & .012 \\
 &  & VLA-Adapter & .983 & .989 & .999 \\
 &  & BEHAVIOR & .924 & .997 & .997 \\
\bottomrule
\end{tabular}
\end{table}

\section{Experimental Setup and Reproducibility}
\label{app:experimental_setup}

This section reports the settings used by the experiments in the paper.  Unless
stated otherwise, a setting is shared by all backbones and both benchmarks.

\subsection{Benchmarks and dataset splits}
\label{app:benchmarks_splits}

\paragraph{LIBERO.}
We use all ten tasks from each of LIBERO-Spatial, LIBERO-Object, LIBERO-Goal,
and LIBERO-10, for 40 tasks in total. See ~\cite{liu2023libero} for the definition of each task.

\paragraph{CALVIN.}
Training starts from \path{InternRobotics/InternData-Calvin_ABC}, which contains
15,976 trajectories and 961,081 frames.  It contains all 34
canonical training tasks ~\cite{mees2022calvinbenchmarklanguageconditionedpolicy}.  We use a frame-balanced,
task-stratified trajectory-samplings strategy.  It contains 4,608
whole trajectories and 274,466 frames.  The target was the LIBERO total of
273,465 frames; the small excess comes from retaining whole trajectories.  No
frame is dropped within a selected trajectory.  The training mixture keeps all
34 tasks.

Closed-loop CALVIN evaluation uses the 27 tasks that can be reset as standalone
episodes by the repository's state constructor.  It excludes
\path{lift_blue_block_drawer}, \path{lift_pink_block_drawer},
\path{lift_red_block_drawer}, \path{place_in_drawer},
\path{place_in_slider}, \path{stack_block}, and \path{unstack_block}.  Thus the
evaluated set contains open/close drawer; left/right slider motion; left/right
rotation and pushing for all three blocks; lifting each block from the table or
slider; on/off for the LED and lightbulb; and push into drawer.

\begin{table*}[t]
\centering
\caption{Training data and fixed trajectory-level validation splits.  ``Samples'' are valid
one-observation, 16-action windows.}
\label{tab:data_split_counts}
\small
\setlength{\tabcolsep}{4pt}
\begin{tabular}{@{}lrrrrrr@{}}
\toprule
Dataset & Demos & Frames & Samples & Train demos & Val. demos & Train/val. samples \\
\midrule
LIBERO-Spatial & 432 & 52,970 & 46,490 & 410 & 22 & 44,116 / 2,374 \\
LIBERO-Object  & 454 & 66,984 & 60,174 & 431 & 23 & 57,192 / 2,982 \\
LIBERO-Goal    & 428 & 52,042 & 45,622 & 407 & 21 & 43,497 / 2,125 \\
LIBERO-10      & 379 & 101,469 & 95,784 & 360 & 19 & 90,984 / 4,800 \\
\midrule
LIBERO total   & 1,693 & 273,465 & 248,070 & 1,608 & 85 & 235,789 / 12,281 \\
CALVIN ABC subset & 4,608 & 274,466 & 205,346 & 4,378 & 230 & 195,438 / 9,908 \\
\bottomrule
\end{tabular}
\end{table*}

For policy training, each dataset is split independently at the trajectory
level.  Five percent of trajectories are held out with fixed split seed
20260708.  This seed is separate from the model-training seed, so every layer,
method, and training seed uses the same train/validation membership.  The
validation data are used for validation loss and for proxy-feature extraction;
they are not used to update a policy.  The final checkpoint is not selected by
validation loss.

On both benchmarks, a task success rate is the number of successful completed
episodes divided by the number of completed episodes.  LIBERO uses the
benchmark environment's success predicate.  CALVIN compares reset and current
environment information with its task oracle.  An episode stops on the first
success.  A suite score is the unweighted mean over its tasks.  The reported
LIBERO score is the unweighted mean over all 40 task rates, and the CALVIN
score is the unweighted mean over the 27 standalone task rates.  It is not a
frame-weighted or demonstration-weighted quantity.
\begin{table*}[t]
\centering
\caption{Actual backbone--benchmark settings.  Width is the language-tower hidden size.
The seed order is the order used in per-seed tables below.}
\label{tab:pair_settings}
\small
\setlength{\tabcolsep}{4pt}
\begin{tabular}{@{}llcrll@{}}
\toprule
Benchmark & Backbone checkpoint & Layers & Width & Oracle seeds & Episodes/task \\
\midrule
LIBERO & \path{nvidia/Cosmos-Reason2-2B} & 28 & 2048 & 42, 1234, 2718 & 10 \\
LIBERO & \path{Qwen/Qwen3-VL-2B-Instruct} & 28 & 2048 & 42, 1234, 2718 & 10 \\
LIBERO & local ABot-Pretrain VLM & 36 & 2560 & 42, 1234, 4321 & 10 \\
CALVIN & \path{nvidia/Cosmos-Reason2-2B} & 28 & 2048 & 42, 1234, 2718 & 15 \\
CALVIN & \path{Qwen/Qwen3-VL-2B-Instruct} & 28 & 2048 & 42, 1234, 2718 & 15 \\
CALVIN & local ABot-Pretrain VLM & 36 & 2560 & 42, 1234, 2718 & 15 \\
\bottomrule
\end{tabular}
\end{table*}

\subsection{Inputs and latent-interface construction}
\label{app:inputs_interface}

Each input has one current external-camera image and one current wrist-camera
image.  LIBERO stores both at $256\times256$ pixels.  CALVIN stores the base
view at $200\times200$ and the wrist view at $84\times84$.  A replayed transform
applies the same crop parameters to both views: resize the shortest edge to
256, take a random 0.95-fraction crop during training or a centered crop during
evaluation, and resize the shortest edge to 256 again.  Training also uses
color jitter with brightness 0.3, contrast 0.4, saturation 0.5, and hue 0.08;
evaluation does not.  The Qwen processor then rescales RGB values by $1/255$
and normalizes each channel with mean and standard deviation 0.5.

The prompt contains the two images followed by the dataset task description in
the Qwen chat format.  Language is lower-cased and punctuation is removed.  No
generation prompt is appended.  The current proprioceptive state is also sent
directly to the action head.  LIBERO uses eight scalars
$(x,y,z,\mathrm{roll},\mathrm{pitch},\mathrm{yaw},g_1,g_2)$ because its source
state contains two gripper entries.  CALVIN uses seven scalars with one gripper
entry and omits joint angles.  State is percentile-normalized, padded to 132
dimensions, and dropped as a whole with probability 0.2 during training.

For a single-layer interface, we take the complete mixed text--image hidden
sequence after language layer $l$.  Its shape is $B\times S\times d$, where
$d=2048$ for Cosmos and Qwen and $d=2560$ for ABot.  $S$ is the dynamic token
length produced by the tokenizer and image grid.  The collator pads it to the
largest $S$ in that minibatch, and the attention mask marks real tokens.  We keep text tokens, image tokens, and chat
control tokens.  We do not select only image tokens and do not mean-pool them.
The raw hidden sequence is produced before the action-head normalization.  The
standard action head then applies a trainable LayerNorm over the last dimension;
the optional VLM self-attention stage is an identity in these runs.  There is
no separate latent projection before cross-attention.  Proxy extraction reads
the same raw, pre-action-head-LayerNorm sequence.  Its critic or metric performs
the pooling and normalization described below.

Layer numbers are one-based.  Layer 1 is the output of the first language
decoder block, not the token embedding.

\subsection{Action representation and prediction head}
\label{app:action_head}

Both benchmarks use seven action scalars per step: end-effector translation
$(\Delta x,\Delta y,\Delta z)$, end-effector rotation
$(\Delta\mathrm{roll},\Delta\mathrm{pitch},\Delta\mathrm{yaw})$, and the
gripper command.  These are the native simulator/controller coordinates stored
by each converted dataset; the repository records no further world-to-base
coordinate transform.  LIBERO sends the delta pose to the robosuite OSC
controller.  CALVIN reads the source \path{action.delta_ee_pos},
\path{action.delta_ee_rot}, and \path{action.gripper} fields.  Each scalar is
normalized to $[-1,1]$ with the training dataset's 1st and 99th percentiles.
Outliers are clipped.  The model predicts 16 valid action steps.  The processor
pads them to an internal $40\times132$ tensor and masks all padded elements.
At rollout time the first eight predicted steps are executed open loop, then
the policy observes again and replans.

The standard head is GR00T N1.7's 16-block AlternateVLDiT.  It has 32 attention
heads of width 48, for an inner width of 1536, and alternates cross-attention
and self-attention.  Eight blocks cross-attend to the selected VLM sequence.
An embodiment-conditioned MLP embeds the padded state as one state token; a
separate embodiment-conditioned encoder embeds the noisy action tokens.  Thus
the policy is conditioned on the selected VLM sequence, current proprioception,
the noisy action trajectory, diffusion time, and embodiment ID.  It does not
read raw images outside the frozen VLM.

Let $a$ be the normalized padded action, $\epsilon\sim\mathcal N(0,I)$, and
$u\sim\mathrm{Beta}(1.5,1.0)$.  Training samples
$t=(1-u)\,0.999$, forms $a_t=(1-t)\epsilon+ta$, and predicts the velocity
$a-\epsilon$.  The loss is the mean squared error over only the valid 16 by 7
entries.  Continuous time is bucketed into 1,000 bins for the timestep
embedding.  Inference starts from standard Gaussian noise and uses four Euler
updates with step $1/4$ at times $0,1/4,1/2,3/4$.  The decoded actions are
un-padded and un-normalized before execution.

\subsection{Policy-training configuration}
\label{app:policy_training}

Every policy uses AdamW (PyTorch implementation), learning rate $10^{-4}$, a
cosine schedule, 400 warmup steps (5\% of training), weight decay $10^{-5}$,
and gradient-norm clipping at 1.0.  Training lasts 8,000 optimizer steps with
global batch size 32, one H100 GPU, no gradient accumulation, and four data
loader workers.  Training and validation use bfloat16; TF32 is enabled and
FP16 is disabled.  Validation batch size is 16.  The main comparison always
uses the step-8,000 checkpoint.  Validation loss is logged but never used to
pick a checkpoint.  Operational save and validation intervals differed across
old drivers: validation was normally every 2,000 steps for the early LIBERO
Cosmos/Qwen sweeps and every 4,000 steps for ABot and CALVIN; some fusion
drivers evaluated validation every 1,000 steps.  This difference cannot affect
the stated checkpoint rule.

The three common policy seeds are 42, 1234, and 2718.  The only exception is
the LIBERO ABot oracle sweep: its seed-2718 sweep was incomplete, so the three
complete oracle curves use 42, 1234, and 4321.  Fusion experiments use 42,
1234, and 2718 for every pair.  Proxy critics and metric sampling also use 42,
1234, and 2718.  The optimizer, schedule, batch size, step count, action head,
image pipeline, validation fraction, and checkpoint rule are shared.  The VLM
depth and width vary by backbone; raw image size, state dimension, dataset,
rollout limit, and episode count vary by benchmark; only the layer interface
and fusion module vary by method.

\subsection{Single-layer oracle sweep}
\label{app:oracle_sweep}

We evaluate every language layer: 1--28 for Cosmos and Qwen and 1--36 for
ABot.  Each layer and seed is an independent run with the same frozen backbone
checkpoint and a freshly initialized action head.  It is trained for the full
8,000-step budget and evaluated closed loop.  The oracle-quality curve is the
arithmetic mean of overall benchmark success over the three policy seeds at
each layer.  The optimal layer is the exact maximum of this seed-mean curve.
The implementation visits layers in ascending order, so an exact numerical tie
is resolved in favor of the shallower layer.

The same closed-loop evaluations are used both to choose the maximum and to
report its performance.  There is no second held-out set for choosing the
oracle layer.  The observed maximum is therefore an in-sample empirical oracle,
not an unbiased estimate of performance after model selection.

\subsection{Fusion implementations and layer configurations}
\label{app:fusion_impl}

All layer lists below are one-based and sorted from shallow to deep before they
are assigned to the action head.  APT uses 16 layers because it has one
injection per expert block.  VLA-Adapter and BEHAVIOR use eight layers because
AlternateVLDiT has eight cross-attention blocks.

\paragraph{Implementations.}
APT uses one self-attention stream over the concatenated VLM, state, and action
tokens.  Before expert block $i$, it adds
$\sigma(g_i)P(\mathrm{LN}(z_{l_i}))$ to the VLM-token slots.  $P$ is one shared
projection from the VLM width to 1536.  Each scalar gate starts at zero, so its
sigmoid starts at 0.5.  Relative to the original APT implementation, ours uses
full bidirectional attention rather than block-causal attention and the shared
GR00T flow-matching objective rather than two-stage diffusion.

Our VLA-Adapter implementation assigns one selected layer to each of the eight
cross-attention blocks.  The assignment is fixed; it has no learned fusion gate
or layer-mixing weights.  Cross-depth information can still propagate through
the action stream's residual state.  This fixed implementation is the code used
for the reported values.

BEHAVIOR gives each of the eight cross-attention blocks separate scalar mixtures
over the same eight selected layers for keys and values, plus per-head key and
value biases.  Each block starts with a one-hot weight on its corresponding
selected layer and zero bias.  It mixes LayerNormed hidden states and uses its
own key/value projections.  Relative to the original method, the VLM is frozen,
the mixture is restricted to eight selected layers instead of all depths, VLM
KV caches are not reused, and the objective and action head follow GR00T.

\begin{table*}[t]
\centering
\caption{Exact Best layer sets.  APT uses Best-16; VLA-Adapter and BEHAVIOR use Best-8.}
\label{tab:best_layer_sets}
\scriptsize
\setlength{\tabcolsep}{3pt}
\begin{tabular}{@{}lll@{}}
\toprule
Pair & Best-16 & Best-8 \\
\midrule
LIBERO/Cosmos & 1,2,4,5,6,7,8,9,10,11,13,14,15,16,17,19 & 5,6,8,11,12,14,15,17 \\
LIBERO/Qwen & 1,2,3,4,5,6,7,8,9,10,11,12,13,14,15,17 & 2,3,4,5,8,9,11,13 \\
LIBERO/ABot & 18,21,22,24,25,26,27,28,29,30,31,32,33,34,35,36 & 26,27,28,29,31,32,33,35 \\
CALVIN/Cosmos & 3,5,8,9,10,11,12,13,14,15,16,17,18,19,20,22 & 10,11,12,13,14,15,16,17 \\
CALVIN/Qwen & 2,3,9,10,11,12,13,14,15,16,17,18,19,20,21,22 & 12,13,14,15,16,17,18,19 \\
CALVIN/ABot & 5,10,11,12,13,14,15,16,17,18,19,20,21,22,23,25 & 11,12,13,15,16,18,21,22 \\
\bottomrule
\end{tabular}
\end{table*}

\subsection{InfoNCE critic training and scoring}
\label{app:infonce_details}

For each backbone--benchmark pair and proxy seed, we extract 4,096 samples from
the fixed policy-validation trajectories.  LIBERO uses exactly 1,024 samples
from each suite; CALVIN uses its single dataset.  Each sample contains the raw
pre-action-head-LayerNorm token sequence and the normalized first $16\times7$
action values flattened to 112 dimensions.  A seeded permutation makes a
3,277/819 (80/20) critic train/held-out split.  For Abot-Pretrain on LIBERO, we
instead extract 19,456 samples (4,864 per suite) from 254 of the 1,693
trajectories (15\%, a superset of the policy-validation trajectories) and split
them 3,072/16,384.  This critic is trained for 50 epochs and evaluated after
every epoch with a single pass over the held-out set; its 16,384 held-out pairs
are also used in Appendix~\ref{app:bound_analysis}.

The latent encoder applies LayerNorm at the VLM width, a linear projection to
256, and learned-query four-head attention over valid tokens, followed by
LayerNorm.  The action encoder is an MLP $112\rightarrow256\rightarrow256$ with
GELU.  A learned $256\times256$ bilinear matrix starts at identity plus Gaussian
noise of standard deviation 0.01.  For action $a_i$ and latent sequence $z_j$,
the logit is
\[
s_{ij}=h(a_i)^\top Wg(z_j).
\]
There is no temperature or $1/\sqrt d$ scaling.  The diagonal is positive and
the other 511 entries in a batch are in-batch negatives.  Individual windows
are shuffled without trajectory grouping, so two samples in a batch may come
from the same trajectory.

The critic uses batch size $N=512$, Adam with learning rate $10^{-3}$ and zero
weight decay, 300 epochs, and gradient-norm clipping at 5.  It is evaluated
every 25 epochs.  At each evaluation it averages ten seeded reshufflings of the
held-out set and keeps the checkpoint with the lowest held-out cross-entropy.
The reported score is $\log(512)-\mathcal L_{\mathrm{heldout}}$ in nats.  It is
not a training-set score.  The full layer curve is computed independently for
seeds 42, 1234, and 2718 and then averaged arithmetically.  InfoNCE-argmax picks
the maximum of that mean curve, with a shallow-layer tie break.

\subsection{Other proxy metrics}
\label{app:other_proxies}

DiME uses the same raw token sequence, flattened normalized $16\times7$ action,
4,096 samples, seeded 80/20 split, and batch size 512 as InfoNCE.  Its latent
encoder is the same 256-dimensional four-head attention pool.  Its action MLP
also has width 256.  Both outputs are unit-$\ell_2$ normalized.  Separate RBF
kernels use learned log bandwidths initialized at zero, or $\sigma=1$.  DiME is
the mean entropy of five independently permuted action Gram matrices minus the
entropy of the aligned Hadamard-product Gram.  The kernel entropy uses
$\alpha=1$.  We maximize held-out DiME with Adam at $10^{-3}$, clip gradients
at 5, evaluate ten reshufflings every 25 epochs, and retain the largest held-out
value.  The six reported pair sweeps use 200 epochs.  The LIBERO/Cosmos DiME
column has only seeds 42 and 1234 and used separate truncated-backbone
extraction for each layer.  The other DiME settings use three seeds and the
one-forward all-layer extractor.  This is an actual protocol difference in the
stored results.

Prompt- and dataset-level matrix entropy are training-free and use the same
4,096 held-out samples.  For dataset entropy, valid tokens are mean-pooled
within each sample to obtain a $4096\times d$ matrix.  Each dimension is
centered across samples and each sample row is unit-$\ell_2$ normalized.  For
prompt entropy, the valid $S_i\times d$ tokens of each sample are centered by
dimension and normalized by token row, entropy is computed for that prompt,
and the 4,096 values are averaged.  Both use the linear Gram matrix, clamp its
entries below zero to zero, symmetrize it, compute eigenvalues in float64,
clamp negative eigenvalues to zero, trace-normalize with $\epsilon=10^{-12}$,
and report von Neumann entropy $-\sum_jp_j\log p_j$ in nats.  The code also
saves $\exp(H)$, but the correlations in the paper use entropy.  For a fixed
pair and seed, all proxies draw from the same fixed trajectory holdout and use
the same nominal 4,096-sample budget; their learned objectives and, for the
Cosmos DiME caveat above, extraction path differ.  Per-sample values become a
layer score by arithmetic averaging, and seed-level layer scores are then
averaged.

\subsection{Closed-loop evaluation procedure}
\label{app:closed_loop}

LIBERO requests ten episodes per task, runs ten environments in parallel, and
allows 720 environment steps.  CALVIN requests 15 episodes per task, runs five
environments in parallel, and allows 360 steps.  The evaluator executes eight
actions per policy query.  LIBERO uses the simulator's task initial states.
CALVIN enumerates valid deterministic initial conditions for each task, shards
them across environment workers, and cycles only after exhausting them.

The evaluation command did not pass an explicit environment seed.  Unless the
unrecorded shell variable \path{GR00T_EVAL_SEED} was set, the historical worker
therefore used nondeterministic simulator randomness.  The repository does not
contain evidence that it was set, so we treat evaluation seeds as unrecorded.
Methods share the same task lists, requested episode counts, rollout limits,
and initial-state procedure, but not a provably identical stream of random
initial states.

Parallel workers finish a batch together.  The saved result can therefore
contain slightly more completed episodes than requested when several workers
finish at once; CALVIN result files commonly contain 408--416 rather than
exactly $27\times15=405$ completed episodes.  The score always uses the actual
saved Boolean outcomes.  A time-limit failure counts as failure.  Infrastructure
failures are retried twice after the first attempt, including a policy-server
restart.  If all three attempts fail, the task is marked missing, excluded from
the aggregate, and the evaluator exits nonzero; such a run is not accepted as
a completed paper result.

\subsection{Statistical analysis and selection regret}
\label{app:statistics}

Every reported policy mean is an arithmetic mean over the three policy-training
seeds listed in Table~\ref{tab:pair_settings}.  A reported ``$\pm$ SD'' for
fusion uses the sample standard deviation with denominator $n-1$.  The shaded
band in Fig.~\ref{fig:success_vs_layer} uses NumPy's population standard
deviation with denominator $n$.  No confidence interval is reported anywhere
in the paper.  Fusion differences subtract two independent three-seed means;
we do not pair the seeds and do not propagate an uncertainty interval for the
difference.

For a seed-averaged oracle curve $q(l)$ and a selected layer $\hat l$, selection
regret is
\[
R=q(l^*)-q(\hat l),\qquad l^*=\arg\max_l q(l),
\]
in success percentage points.  The APT comparator uses the same first term and
subtracts its three-seed fusion mean, so its value may be negative.  We do not
attach a confidence interval to regret.  Spearman correlations are computed
between the seed-averaged policy curve and seed-averaged proxy curve, not
between individual-seed curves.

Training randomness is the variation across policy seeds.  Evaluation
randomness is the finite set of unseeded closed-loop rollouts within each
training seed.  It was not replicated separately, so the two sources of
variation cannot be decomposed from these experiments.  Raw task-level Boolean
outcomes are kept in each result JSON.

Tables~\ref{tab:fusion_seed_libero} and~\ref{tab:fusion_seed_calvin} give the
per-seed fusion results behind the means and standard deviations in the paper.
Each cell is seed 42 / 1234 / 2718, in percent.

\begin{table}[!t]
\centering
\caption{Per-seed success at the layer selected by the three-seed oracle curve.  Values are
percent.  The triple follows the seed order in Table~\ref{tab:pair_settings}.}
\label{tab:oracle_per_seed}
\scriptsize
\setlength{\tabcolsep}{4pt}
\begin{tabular}{@{}llcrcc@{}}
\toprule
Benchmark & Backbone & $l^*$ & Per-seed success & Mean & Sample SD \\
\midrule
LIBERO & Cosmos & 15 & 76.8 / 74.5 / 77.6 & 76.3 & 1.6 \\
LIBERO & Qwen   & 3  & 75.6 / 79.8 / 75.1 & 76.8 & 2.6 \\
LIBERO & ABot   & 31 & 98.2 / 98.3 / 98.8 & 98.4 & 0.3 \\
CALVIN & Cosmos & 11 & 58.3 / 67.7 / 67.3 & 64.4 & 5.3 \\
CALVIN & Qwen   & 16 & 63.7 / 68.7 / 61.7 & 64.7 & 3.6 \\
CALVIN & ABot   & 12 & 68.1 / 65.1 / 65.1 & 66.1 & 1.7 \\
\bottomrule
\end{tabular}

\vspace{2ex}

\centering
\caption{LIBERO fusion success by policy-training seed.}
\label{tab:fusion_seed_libero}
\scriptsize
\setlength{\tabcolsep}{4pt}
\begin{tabular}{@{}llccc@{}}
\toprule
Backbone & Method & Best & Even & Last \\
\midrule
Cosmos & APT & 75.0/67.9/80.5 & 72.5/74.8/72.1 & 74.5/65.7/77.8 \\
Cosmos & VLA-Adapter & 69.0/76.3/73.4 & 71.6/69.7/73.5 & 62.5/61.8/60.1 \\
Cosmos & BEHAVIOR & 71.7/70.0/73.7 & 72.6/58.3/69.4 & 54.1/56.4/57.8 \\
Qwen & APT & 76.8/69.4/71.9 & 69.4/67.1/71.6 & 58.6/75.7/72.0 \\
Qwen & VLA-Adapter & 73.4/74.7/70.1 & 62.7/66.5/61.1 & 56.9/58.4/51.7 \\
Qwen & BEHAVIOR & 76.9/70.4/67.0 & 75.1/67.5/74.3 & 54.5/45.5/50.3 \\
ABot & APT & 96.2/94.8/97.9 & 93.5/94.7/95.5 & 97.0/98.0/96.0 \\
ABot & VLA-Adapter & 97.3/98.2/96.5 & 44.3/45.0/47.0 & 98.5/98.6/98.9 \\
ABot & BEHAVIOR & 99.3/98.8/96.8 & 43.4/98.0/55.2 & 98.3/99.2/98.0 \\
\bottomrule
\end{tabular}

\vspace{2ex}

\centering
\caption{CALVIN fusion success by policy-training seed.}
\label{tab:fusion_seed_calvin}
\scriptsize
\setlength{\tabcolsep}{4pt}
\begin{tabular}{@{}llccc@{}}
\toprule
Backbone & Method & Best & Even & Last \\
\midrule
Cosmos & APT & 46.4/49.8/61.6 & 50.0/52.0/47.3 & 65.6/67.1/69.4 \\
Cosmos & VLA-Adapter & 56.8/65.5/56.1 & 53.1/59.2/54.6 & 42.3/44.6/36.9 \\
Cosmos & BEHAVIOR & 54.3/63.8/59.5 & 63.2/56.0/58.2 & 37.0/36.8/38.0 \\
Qwen & APT & 58.1/55.8/63.1 & 46.1/53.2/46.7 & 69.3/60.7/65.8 \\
Qwen & VLA-Adapter & 54.0/63.7/60.4 & 57.1/56.9/59.7 & 44.6/42.4/44.4 \\
Qwen & BEHAVIOR & 57.2/56.0/60.1 & 57.1/60.1/56.5 & 42.5/36.7/36.7 \\
ABot & APT & 73.9/78.1/75.9 & 70.0/77.4/72.9 & 91.0/82.7/86.5 \\
ABot & VLA-Adapter & 58.7/57.2/62.6 & 46.8/54.7/54.5 & 39.9/46.7/43.6 \\
ABot & BEHAVIOR & 51.1/61.8/61.9 & 48.3/51.8/54.8 & 39.3/40.1/48.2 \\
\bottomrule
\end{tabular}
\end{table}

\subsection{Computational cost and reproducibility resources}
\label{app:compute_repro}

All reported jobs use one NVIDIA H100 per run.  Multiple runs can execute at
the same time, but one GPU-hour always means one H100 occupied for one wall-clock
hour.  Oracle cost is the sum of the recorded \path{seconds} fields for every
successful layer's 8,000-step training job and closed-loop evaluation job over
three seeds.  Thus it includes policy training and evaluation.  InfoNCE cost is
the recorded end-to-end proxy sweep time and includes VLM feature extraction
and critic training.  Features are held on CPU between extraction and critic
training; there is no separately charged persistent feature-cache stage.
Table~\ref{tab:layer_sweep_cost} reports these full-sweep totals.  The aggregate
InfoNCE artifacts do not preserve per-phase timers, so extraction and critic
cost cannot be split after the fact.  The LIBERO/Cosmos cost cell uses the
three complete timing replicates with seeds 1234, 2718, and 31415; its reported
performance uses seeds 42, 1234, and 2718.  This is why summing the performance
runs' progress files does not reproduce 129.1 exactly.  The other oracle cost
cells use the performance runs listed above.

\end{document}